\title{MgNet: A Unified Framework \\ of Multigrid and Convolutional Neural Network}
\author{Juncai He\footnotemark[1] ~and Jinchao Xu\footnotemark[2]}
\date{} % 去掉日期
\newcommand{\classmap}{H}
\newcommand{\linearize}{H_0}
\newcommand{\FC}{{\mathcal D}}
\newenvironment{breakablealgorithm}
{% \begin{breakablealgorithm}
	\begin{center}
		\refstepcounter{algorithm}% New algorithm
		\hrule height.8pt depth0pt \kern2pt% \@fs@pre for \@fs@ruled
		\renewcommand{\caption}[2][\relax]{% Make a new \caption
			{\raggedright\textbf{\ALG@name~\thealgorithm} ##2\par}%
			\ifx\relax##1\relax % #1 is \relax
			\addcontentsline{loa}{algorithm}{\protect\numberline{\thealgorithm}##2}%
			\else % #1 is not \relax
			\addcontentsline{loa}{algorithm}{\protect\numberline{\thealgorithm}##1}%
			\fi
			\kern2pt\hrule\kern2pt
		}
	}{% \end{breakablealgorithm}
		\kern2pt\hrule\relax% \@fs@post for \@fs@ruled
	\end{center}
}
\newtheorem{theorem}{Theorem}
\newtheorem{remark}{Remark}
\newtheorem{lemma}{Lemma}
\newcommand{\relu}{\mbox{{\rm ReLU}}}
\begin{document}
	% \nipsfinalcopy is no longer used
	
\maketitle 
\renewcommand{\thefootnote}{\fnsymbol{footnote}} %将脚注符号设置为fnsymbol类型，即特殊符号表示
\footnotetext[1]{School of Mathematical Sciences, Peking University, Beijing 100871, China (juncaihe@pku.edu.cn).} %对应脚注[1]
\footnotetext[2]{Department of Mathematics, The Pennsylvania State University, University Park, PA 16802, USA (xu@math.psu.edu).} %对应脚注[2]

\begin{abstract}
	We develop a unified model, known as MgNet, that simultaneously recovers some 
	convolutional neural networks (CNN) for image classification and multigrid (MG) 
	methods for solving discretized partial differential equations (PDEs).  This model is 
	based on close connections that we have observed and uncovered between the CNN 
	and MG methodologies.  For example, pooling operation and feature extraction in CNN 
	correspond directly to restriction operation and iterative smoothers in MG, respectively.
	As the solution space is often the dual of the data space in PDEs, the analogous concept
	of feature space and data space (which are dual to each other) is introduced in CNN.   
	With such connections and new concept in the unified model, the function of various
	convolution operations and pooling used in CNN can be better understood. 
	As a result, modified CNN models (with fewer weights and hyperparameters) 
	are developed that exhibit competitive and sometimes better performance in 
	comparison with existing CNN models when applied to both CIFAR-10 and CIFAR-100 data sets.
\end{abstract}

%\tableofcontents

%\newpage
\section{Introduction}
%%%%%%%%%%%%%%%%%%%%%%%%%%%%%%%%
This paper is devoted to the study of convolutional neural networks
(CNN) \cite{lecun1998gradient, krizhevsky2012imagenet, goodfellow2017deep} in machine learning by exploring their
relationship with multigrid methods for numerically solving partial
differential equation \cite{xu1992iterative,xu2002method, hackbusch2013multi}.  CNN has
been successfully applied in many areas, especially computer vision
\cite{lecun2015deep}.
Important examples of CNN include the LeNet-5 model of LeCun et al. in
1998 \cite{lecun1998gradient}, the AlexNet of Hinton et el in 2012
\cite{krizhevsky2012imagenet}, Residual Network (ResNet) of K. He et
al. in 2015 \cite{he2016deep} and other variants of CNN in
\cite{simonyan2014very, szegedy2015going,
	huang2017densely}.
Given the great success of CNN models, it is of both theoretical and
practical interest to understand why and how CNN works.

In 1990s, the mathematical analysis of DNN mainly focus on
the approximation properties for  DNN and CNN models. 
The first approximation results for DNN are obtained for a feedforward neural network with a single
hidden layer separately in \cite{hornik1989multilayer} and \cite{cybenko1989approximation}.
From 1989 to 1999, many results about the so-called expressive 
power of single hidden neural networks are derived 
\cite{barron1993universal, ellacott1994aspects, pinkus1999approximation}.
In recent, many new DNN structure with ReLU \cite{nair2010rectified} activation function
have been studied in connection with: wavelets \cite{shaham2018provable},
finite element \cite{he2018relu}, sparse grid \cite{montanelli2017deep} 
and polynomial expansion \cite{e2018exponential}.
By using a connection of CNN and DNN that a convolution with large enough
kernel can recover any linear mapping, \cite{zhou2018universality} presents
an approximate result with convergence rate by deep CNNs 
for functions in the Sobolev space $H^r(\Omega)$ with $r > 2 + d/2$, see also 
most recent result of \cite{siegel2019approximation}.

These function approximation theories for deep learning, 
are far from being adequate to explain why deep neural network, 
especially for CNN, works and to understand the efficiency of 
some successful models such as ResNet. 
One goal of this paper is to offer some mathematical insights into CNN by using ideas
from multigrid methods and by developing a theoretical framework for
these two methodologies from different fields. Furthermore, such insight is used to 
develop more efficient CNN models.

In the existing deep learning literature, ideas and techniques from multigrid 
methods have been used for the development of efficient
deep neural networks.  As a prominent example,  the 
ResNet  and iResNet developed in \cite{he2016deep, he2016identity},
are motivated in part by the hierarchical use of
``residuals'' in multigrid methods as mentioned by the authors.
As another example, in \cite{ronneberger2015u, milletari2016v}, 
a CNN model with almost the same structure as the V-cycle multigrid is 
proposed to deal with volumetric medical image segmentation and 
biomedical image segmentation.  More recently, multi-resolution images 
have been used as the input into the neural network in \cite{haber2017learning}. 
\cite{ke2016multigrid} use different net works to deal with 
multi-resolution images separately with a CNN to glue them together. 

A dynamic system viewpoint has also been explored in many papers such as 
\cite{haber2017learning,e2017a, lu2018beyond}  
to understand the iterative structure in ResNet type models such as 
the iResNet model in \cite{he2016identity}:
\begin{equation}\label{resnet-form}
x^{i} = x^{i-1} + f^i(x^{i-1}).
\end{equation}
Such an idea is further explored by \cite{li2017a} to use some flow
model to interpret the date flow in ResNet as the solution of
transport equation following the characteristic line.
\cite{chang2017multi} proposes a multi-level training algorithm for
the ResNet model by training a shallow model first and then prolongating its
parameters to train a deeper model.  \cite{lu2018beyond} uses the
idea of time discretization in dynamic systems to interpret PloyNet
\cite{zhang2017polynet}, FractalNet \cite{larsson2016fractalnet} and
RevNet \cite{gomez2017reversible} as different time discretization
schemes.  Then they propose the LM-ResNet based on the idea of linear
multi-step schemes in numerical ODEs with a stochastic learning
strategy.  \cite{long2018pde1, long2018pde2} construct the PDE-Net
models to learn PDE model from data connecting discrete
differential operators and convolutions.

In a different direction, new multigrid methods for numerical PDEs can
be motivated by deep learning.  For example, in
\cite{katrutsa2017deep} a Deep Multigrid Method is proposed where the
restriction and prolongation matrices with a given sparsity pattern
are trained by minimizing the Frobenius norm of a large power of the
multigrid error propagation matrix with a sampling technique similar
to what is used in machine learning.  In \cite{hsieh2018learning}, a
linear U-net structure is proposed as a solver for linear PDEs on the
regular mesh.

In this paper, we explore the connection between multigrid and convolutional neural networks,
in several directions.  First of all, we view the 
multi-scale of images used in CNN  as
piecewise (bi-)linear functions as used in multigrid methods, and we 
relate the pooling
operation in CNN with the restriction operation in multigrid.

To examine further connections between CNN and
multigrid, we introduce the so-called data and feature space for CNN,
which is analogous to the function space and its duality in the theory
of multigrid methods \cite{xu2017algebraic}.  With this new concept
for CNN, we propose the data-feature mapping model in every grid as
\begin{equation}\label{eq:fmapping}
A(u) = f,
\end{equation}
where $f$ belongs to the data space and $u$ belongs to the feature space. 
The feature extraction process can then be obtained through an
iterative procedure
for solving the above system, namely
\begin{equation}\label{BAmapping}
u^{i} = u^{i-1} + B^{i} (f^{} - A^{}(u^{i-1})), \quad i = 1:\nu,
\end{equation}
with $u \approx u^{\nu}$. The above iterative scheme \eqref{BAmapping}
can be interpreted as both the feature extraction step in ResNet type models
and the smoothing step in multigrid method.

Using the above observations and new concepts, we develop a unified framework, called MgNet, that
simultaneously recovers some convolutional neural networks and multigrid methods.
Furthermore, we establish connections between several ResNet type
models using the MgNet framework.
We provide improvements/generalizations of several ResNet type
models that are as competitive as and sometimes  more efficient than existing models, as 
demonstrated by  numerically
experiments for both CIFAR-10 and CIFAR-100 \cite{krizhevsky2009learning}.

The remaining sections are organized as follows. In \S~\ref{sec:MLbasics}, we 
introduce some notation and preliminary results in supervised learning especially 
for image classification problem. In \S~\ref{sec:spaces}, we present the idea that
we need to distinguish the data and feature space in CNN models and introduce some
related mappings. In \S~\ref{sec:functions}, we explore the structures and operators when
we consider images as (bi-)linear functions in multilevel grids. In \S~\ref{sec:mg}, we 
introduce multigrid by splitting it into two phases. In \S~\ref{sec:mgnet}, we give an abstract form 
of MgNet as a framework for multigrid and convolutional neural network with details.
In \S~\ref{sec:CNNs}, we introduce some classical CNN structures with rigorous mathematical 
definition. In \S~\ref{sec:relation}, we construct some relations and connections between MgNet and classic models. In \S~\ref{sec:numerics}, we present some numerical results to show the efficiency of MgNet. In \S~\ref{sec:conclusion} we give concluding remarks.

\section{Supervised learning on image classification}\label{sec:MLbasics}
We consider a basic machine learning problem for classifying a
collection of images into $\kappa$ distinctive classes.  As an
example, we consider a two-dimensional image which is usually
represented by a tensor
$$
f\in \mathbb  \FC := \mathbb{R}^{m\times n\times c}.
$$
Here 
\begin{equation}
\label{data-c}
c=
\left\{
\begin{array}[rl]{rl}
1 & \mbox{for grayscale image},\\    
3 & \mbox{for color image}.
\end{array}
\right.
\end{equation}

A typical supervised machine learning problem begins with a data set (training data)
$$
D := \{(f_i, y_i)\}_{i=1}^N,
$$ 
with
$$
\{f_i\}_{i=1}^N \subset \FC,
$$
and $y_i \in \mathbb{R}^{\kappa}$ is the label for data $f_i$, with
$[y_i]_j$ as the probability for $f_i$ in classes $j$.

Roughly speaking, a supervised learning problem can be thought as data fitting
problem in a high dimensional space $\FC$.
Namely, we need to find a mapping
$$
\classmap:  \mathbb R^{m\times n\times c}\mapsto \mathbb R^\kappa,
$$
such that, for a given $f\in  \FC$, 
\begin{equation}\label{eq:idealouput}
\classmap(f)\approx e_i\in \mathbb R^\kappa,
\end{equation}
if $f$ is in class $i$, for $1\le i\le \kappa$. 
For the general setting above, we use a probatilistic model for understanding the
output $\classmap(f) \in \mathbb{R}^{\kappa}$ as a discrete 
distribution on $\{1, \cdots,\kappa\}$, with $[\classmap (f)]_i$ as the probability
for $f$ in the class $i$, namely
\begin{equation}
\label{distrib}
0 \le [\classmap(f)]_i \le 1,\quad 
\sum_{i=1}^\kappa  [\classmap(f)]_i=1. 
\end{equation}
At last, we finish our model with a simple strategy to choose
\begin{equation}\label{eq:maxchoose}
\mathop{\arg\max}_{i}\{[\classmap(f)]_i~:~ i = 1:\kappa\},
\end{equation}
as the label for a test data $f$, which ideally is close to
\eqref{eq:idealouput}.  The remaining key issue is the construction of
the classification mapping $\classmap$.

%\subsection{Model construction}
%In many CNN type models for image classification, there are usually
%three major steps to construct $\classmap$.
The main step in the construction of $\classmap$ is to 
construct a nonlinear mapping
\begin{equation}
\label{linearize}
\linearize: \FC \mapsto V_J,
\end{equation}
with 
%\begin{equation}
%  \label{F}
%\FC= \mathbb R^{m\times n\times c}  
%\end{equation}
\begin{equation}
\label{VJ}
V_J = \mathbb R^{m_J\times n_J\times c_J}. 
\end{equation}
To be consistent with the notation for CNN which will be described below,
here the subscript $J$ refers to the number of 
coarsening girds in CNN. 
Roughly speaking, the map $\linearize$ plays two roles.  The first role
is to conduct a dimensionality reduction, namely
$$
m_Jn_Jc_J\ll  mnc.
$$
The second role is to map a complicated set of data into a set of data
that are linearly separable. As a result, the simple logistic regression 
procedure can be applied.

The first step in a logistic regression is to introduce a linear mapping:
\begin{equation*}
\Theta: \FC \to\mathbb{R}^{\kappa} ,
\end{equation*}as 
\begin{equation}\label{thetamap}
\Theta(x)=Wx+b,
\end{equation}
where $W=(w_{ij})\in\mathbb{R}^{(m_J \times n_J \times c_J)\times \kappa}$, 
$b\in\mathbb{R}^{\kappa}$.

We then use the soft-max function.
\begin{equation}
\label{softmax}
[S(z)]_i=[{\rm Solftmax}(z)]_i= \frac{e^{z_i}}{\sum_{j} e^{z_j}},
\end{equation}
to obtain a logistic regression model 
\begin{equation}
\label{eq:log_reg}
S \circ \Theta: \mathbb R^{m_J\times n_J\times c_J}\mapsto \mathbb R^\kappa.
\end{equation}
%	This makes $S\circ \linearize(f) \in \mathbb{R}^\kappa$ as a distribution on 
%	$\{1,\cdots,\kappa\}$. 

By combining the nonlinear mapping $\classmap$ in \eqref{linearize}
and the logistic regression \eqref{eq:log_reg}, we obtain the following classifier:
\begin{equation}
\label{classifier}
\classmap=  S\circ \Theta\circ \linearize.
\end{equation}

%\subsection{Loss function and training}  
Given the model \eqref{classifier},
we finish the training phase with solving the next optimization 
problem:
\begin{equation}
\label{eq:3}
\min \sum_{j=1}^Nl(\classmap(f_j),y_j),
\end{equation}
where
Here $l(\classmap(f_j),y_j)$ is a  loss function that measures the
predicted result $\classmap(f_j)$ and the real label $y_j$. 
In logistic regression, 
the following cross-entropy loss function is often used
$$
l(\classmap(f), y) = \sum_{i=1}^\kappa -[y]_i \log [\classmap(f)]_i.
$$

\section{Data space, feature space and relevant mappings}\label{sec:spaces}
Given a data
\begin{equation}
\label{data-f}
f \in \mathbb{R}^{m \times n \times c}, 
\quad \text{or}\quad [f]_i \in
\mathbb{R}^{m\times n}, \quad i = 1:c,
\end{equation}
where $m\times n$ is called the spatial dimension and $c$ is the 
channel dimension.

For the given data $f$ in \eqref{data-f}, we look for some
feature vector, denoted by $u$,  associated with $f$:
\begin{equation}
\label{u}
u \in \mathbb{R}^{m \times n \times h}.
\end{equation}
We make an assumption that the data $f$ and feature $u$ are related by a mapping 
(which can be either linear or nonlinear)
\begin{equation}
\label{u}
A:  \mathbb{R}^{m \times n \times h}\mapsto \mathbb{R}^{m \times n \times c}, 
\end{equation}
so that
\begin{equation}
\label{Auf}
A(u)=f. 
\end{equation}
A mapping 
\begin{equation*}
B : \mathbb{R}^{m\times n\times c} \mapsto \mathbb{R}^{m \times n\times h},
\end{equation*}
is called a feature extractor if $B \approx A^{-1}$ and 
\begin{equation}\label{vBf}
v = B(f),
\end{equation}
is such that $v \approx u$.

The data-feature relationship \eqref{Auf} or \eqref{vBf} is not
unique.   Different relationships give rise to different features. 
We can view the data-feature relationship given in \eqref{Auf}
as a model that we propose.  Here the mapping $A$, which can be either
linear or nonlinear, is unknown and needs to be trained.  

%\begin{remark}
We point out  that the data space and feature space may have different
numbers of channels.
%\end{remark}

\subsection{A special linear mapping: convolution}
%Firstly, as mentioned before, an image type
%data with multichannel can be understand as  
%vector valued functions
%$$
%f \in \mathbb{R}^{m \times n \times c}, \quad \text{or}\quad [f]_i \in
%\mathbb{R}^{m\times n}, \quad i = 1:c,
%$$
One important class of linear mapping is the so-called convolution:
%We first consider $\theta$ a convolution operator (with stride $1$) 
%and padding:
$$
\theta: \mathbb{R}^{m\times n\times c} \mapsto \mathbb{R}^{m\times n \times h},
$$
that can be defined by 
\begin{equation}\label{conv-1}
[\theta(f)]_{t} = \sum_i^{c}K_{i,t} \ast [f]_i + b_t 
\bm{1}  \in \mathbb{R}^{m\times n}, \quad t = 1:h,
\end{equation}
where $\bm{1}  \in \mathbb{R}^{m\times n} $ is a 
$m\times n$ matrix with all elements being $1$,
and for $g \in \mathbb{R}^{m\times n}$
\begin{equation}\label{con1}
[K \ast g]_{i,j} = \sum_{p,q=-k}^k K_{k+1+p,k+1+q} g_{i + p, j + q}, \quad i=1:m, j = 1:n.
\end{equation}
The coefficients in \eqref{con1} constitute  a kernel matrix
\begin{equation}
K \in \mathbb{R}^{(2k+1) \times (2k+1)},
\end{equation}
where $k$ is often taken as small integers. 
Here padding means how to choose $ X_{i+ p, j + q}$ 
when $(i+ p, j + q)$ is out of $1:m$ or $1:n$. 
Those next three choices are often used
\begin{equation}\label{eq:padding}
f_{i + p, j + q} = \begin{cases}
0,  \quad &\text{zero padding}, \\
f_{(i + p)\pmod{m}, (s + q)\pmod{n}},  \quad &\text{periodic padding}, \\
f_{|i-1 +p|, |j -1  +q|},  \quad &\text{reflected padding}, \\
\end{cases}
\end{equation}
if 
\begin{equation}
i + p \notin \{1, 2, \dots, m\} ~\text{or} ~  j+ q \notin \{1, 2, \dots, n\}.
\end{equation}
Here $ d \pmod m \in \{1, \cdots, m\} $  means the remainder when $d$ is divided by $m$.

%Then we remark that, if we take convolution without pooling, there is a 
%shift with 
%$$
%i+ p  \rightarrow i + k+ p,
%$$
%and also for $j$ such that all index $(i + k+ p, j + k+ p)$ is during
%$1:n$. 

If we formally write 
\begin{equation}
f=
\begin{pmatrix}
f_1\\
\vdots\\
f_c  
\end{pmatrix}.
\end{equation}
We can then write the operation \eqref{conv-1} as
\begin{equation}
\label{eq:4}
\theta(f)=K\ast f+b,
\end{equation}
where 
$$
K=(K_{ij})\in \mathbb R^{[(2k+1)\times(2k+1)]\times h\times c},
$$
and 
$$
\bm{b}=\bm{1}_{m\times n}\otimes b.
$$

The operation \eqref{eq:4} is also called a convolution with stride 1. More generally, 
given an integer $s\ge1$, a convolution with stride $s$ for $f \in \mathbb{R}^{m\times
	n}$ is defined as:
\begin{equation}\label{stride}
[K \ast_s f]_{i,j} = \sum_{p,q=-k}^k K_{p,q} f_{s(i-1)+1 + p, s(j-1)+1 + q},  
\quad i = 1: \lceil  \frac{m}{s}\rceil , j = 1: \lceil  \frac{n}{s}\rceil.
\end{equation}
Here $ \lceil  \frac{m}{s}\rceil$ denotes the smallest integer that greater than $\frac{m}{s}$.
In CNN, we often take $s=2$.

\subsection{Some linear and nonlinear mappings and extractors}
A data-feature map $A$ and feacture extractor $B$ can be either
linear or nonlinear.   The nonlinearity can be obtained from
appropriate application of an activation function
\begin{equation}
\label{act}
\sigma: \mathbb{R} \to \mathbb{R} .
\end{equation}
In this paper, we mainly consider a special activation function, known 
as the {\it rectified linear unit} (ReLU), which is defined by
\begin{equation}
\label{relu}
\sigma(x)= \relu(x) :=\max(0,x), \quad x\in\mathbb{R}. 
\end{equation}
By applying the function to each component, we can extend this
\begin{equation}
\label{vector-act}
\sigma:\mathbb R^{m\times n\times c}\mapsto \mathbb R^{m\times n\times c}.  
\end{equation}

A linear data-feature mapping can simply given by a convolution as in \eqref{con1}:
\begin{equation}
\label{linearA}
A(u)=\xi\ast u.
\end{equation}
A nonlinear mapping can be given by compositions of convolution and
activation functions:
\begin{equation}
\label{nonlinearA}
A=\xi\circ\sigma\circ\eta ,
\end{equation}

and 
\begin{equation}
\label{extractor}
B=\sigma\circ \gamma \circ\sigma  .
\end{equation}
Here $\xi$, $\eta$ and $\gamma$ are all 
appropriate convolution mappings.

\subsection{Iterative feature extraction schemes}
One key idea in this paper is that we consider different iterative
processes to approximately solve \eqref{Auf} and relate them to
many existing popular CNN models. Here, let us assume that 
the feature-data mapping \eqref{Auf}  is given as a linear form \eqref{linearA}.
We next propose some iterative schemes to solve \eqref{Auf}
for an appropriately chosen $u^0$.
\begin{itemize}
	\item Residual correction method, 
	\begin{equation}\label{eq:smoothB}
	u^{i} = u^{i-1} + B^{i}(f- A(u^{i-1})), \quad i=1:\nu.
	\end{equation}
	Here $B^i$ can be chosen as linear like $B^{i}(f) = \eta^i \ast f$ or nonlinear
	like \eqref{extractor}. The reason why $B^{i}$ is taken
	the nonlinear form as in \eqref{extractor} will be discussed later based on our
	main discovery about the relationship
	between MgNet and iResNet as discussed in \S~\ref{sec:CNNs} and \S~\ref{sec:relation}. 
	We refer to \cite{xu1992iterative}
	for more discussion on iterative schemes in the form of \eqref{eq:smoothB}.
	\item Semi-iterative method for accelerating the residual correction iterative scheme,
	\begin{equation}\label{eq:multi}
	u^{i} = \sum_{j=0}^{i-1}  \alpha_{j}^i\left( u^{j} + B^{i}_j(f - A(u^j)) \right), \quad i=1:\nu,
	\end{equation}
	where $\alpha_j^i \ge 0$ and $\sum_{j=0}^{i-1}  \alpha_{j}^i = 1$.
	Letting the residual $r^j = f - A(u^{j})$ for $j=0:i$, 
	the following iterative scheme for $r^j$ is implied by \eqref{eq:multi}
	\begin{equation}\label{eq:multi-res}
	r^{i} = \sum_{j=0}^{i-1}\alpha^i_j(I - AB^{i}_j)(r^j),
	\end{equation}
	because of the linearity of $A$. This scheme is analogous to the 
	DenseNet \cite{huang2017densely} which will be discussed more in \S~\ref{sec:CNNs} and below.
	More discussion on semi-iterative method for linear system 
	can be found in \cite{hackbusch1994iterative, golub2012matrix}.
	\item Chebyshev semi-iterative method, 
	\begin{equation}\label{eq:chebysev-semi}
	u^{i} = \omega^i\left(u^{i-1} + B^{i}\left(f- A(u^{i-1})\right)\right)+ (1- \omega^i) u^{i-2},\quad i=1:\nu.
	\end{equation}
	The above scheme can be obtained from the above semi-iterative form
	by applying the Chebyshev polynomial theory \cite{hackbusch1994iterative, golub2012matrix}. 
	Similar to the previous case, considering the iterative form of the residual $r^j = f - A(u^{j})$,
	\eqref{eq:chebysev-semi} implies that
	\begin{equation}
	r^{i} = \omega^i r^{i-1} + (1-\omega^i)r^{i-2} - AB^i r^{i-1}.
	\end{equation}
	This scheme corresponds to the LM-ResNet in \cite{lu2018beyond} 
	which was obtained as a linear multi-step scheme for some underlying ODEs.
\end{itemize}

\section{Piecewise (bi-)linear functions on multilevel grids}\label{sec:functions}
An image can be viewed as a function on a grid.  Images
with different resolutions can then be viewed as functions on grids of
different sizes.  The use of such multiple-grids is a main technique
used in the standard multigrid method for solving discretized partial
differential equations \cite{xu1992iterative, xu2002method}, 
and it can also be interpreted as a main ingredient used in
convolutional neural networks (CNN). 

Without loss of generality, for simplicity, we assume that the initial
grid, $\mathcal T$, is of size
$$
m=2^{s}+1, n=2^{t}+1 ,
$$
for some integers $s, t\ge 1$.
Starting from $\mathcal T_1=\mathcal T$,  we consider a sequence of
coarse grids (as depicted in Fig.~\ref{mgrid} with $J=4$):
\begin{equation}
\label{grids}
\mathcal T_1,~ \mathcal T_2,~ \ldots,~ \mathcal T_J,
\end{equation}
such that ${\cal T}_\ell$ consist of $m_\ell\times n_\ell$ grid
points, with 
\begin{equation}
\label{mn-ell}
m_\ell=2^{s-\ell+1}+1, \quad  n_\ell=2^{t-\ell+1}+1.   
\end{equation}
\begin{figure}[!htbp]\label{mgrid}
	\begin{center}
		\includegraphics[width=0.15\textwidth]{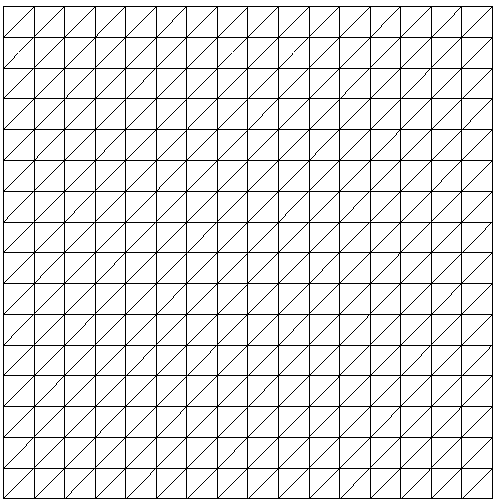} \quad 
		\includegraphics[width=0.15\textwidth]{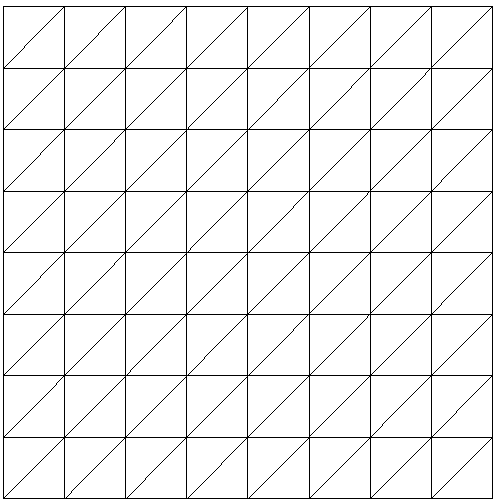} \quad 
		\includegraphics[width=0.15\textwidth]{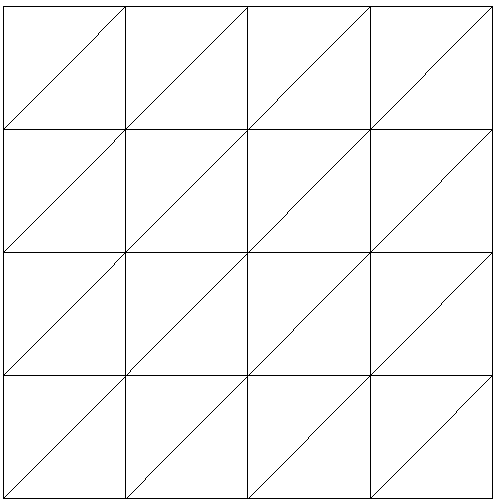} \quad 
		\includegraphics[width=0.15\textwidth]{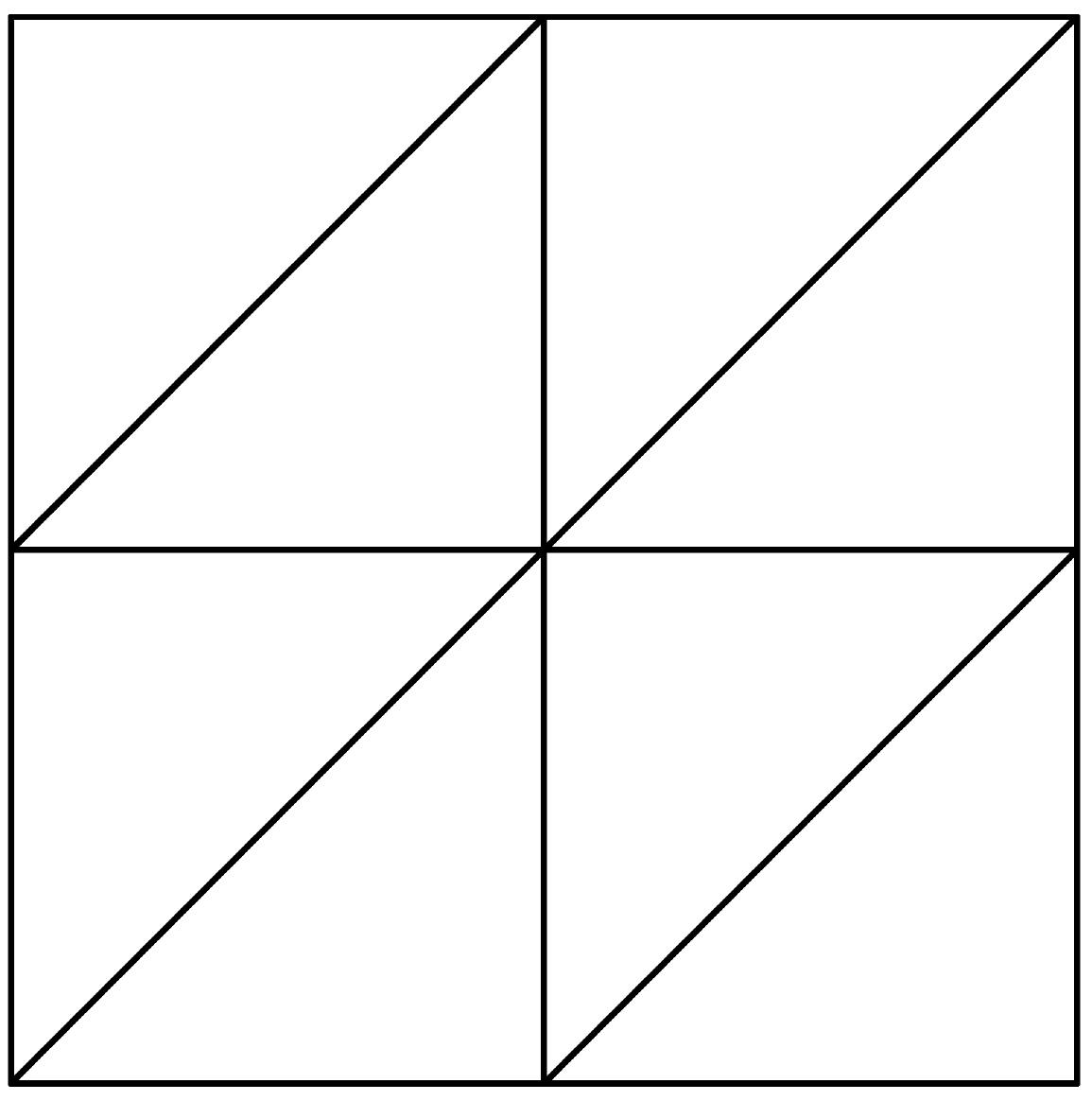} 
	\end{center}
	$$ 
	\mathcal T_1\hskip1in \mathcal T_2\hskip1in \mathcal T_3\hskip1in\mathcal T_4
	$$
	\caption{multilevel grids for piecewise linear functions}
\end{figure}

The grid points of these grids can be given by
$$
x_i^{\ell}=i h_{1,\ell}, y_j^{\ell}=j h_{2,\ell},  i=1, \ldots, m_\ell,
j=1, \ldots, n_\ell.
$$
Here $h_{1,\ell} = 2^{-s + \ell -1}a$ and $h_{2,\ell} = 2^{-t + \ell - 1}b$,
for some $a,b >0$. The above geometric coordinates $(x_i^\ell, y_i^\ell)$
are usually not used in image precess literatures, but they are relevant
in the context of multigrid method for numerical solution of PDEs.
We now consider piecewise linear functions on the sequence of grids
\eqref{grids} and we obtain a nested sequence of linear vector spaces
\begin{equation}
\label{Vk}
\mathcal V_1\supset\mathcal V_2\supset\ldots\supset \mathcal
V_J.
\end{equation}
Here each $\mathcal V_\ell$ consists of all piecewise bilinear (or linear)
functions with respect to the grid \eqref{grids} and \eqref{mn-ell}.
Each $\mathcal V_\ell $ has a set of basis functions:
$\phi_{ij}^\ell\in \mathcal V_\ell$ satisfying:
$$
\phi_{ij}^\ell(x_p,y_q)=\delta_{(i,j), (p,q)} = 
\begin{cases}
1 \quad &\text{if} \quad (p,q) = (i,j), \\
0 \quad &{\text{if}} \quad (p,q)\neq (i,j).
\end{cases}
$$
Thus, for each $v \in \mathcal V_{\ell}$, we have 
\begin{equation}\label{expand}
v(x,y)=\sum_{i,j}v^\ell_{ij}\phi_{ij}^\ell(x,y).
\end{equation}
\subsection{Prolongation}
Given a piecewise (bi-)linear function $\bm v\in\mathcal V_{\ell+1}$, the
nodal values of $\bm v$ on $m_{\ell+1}\times n_{\ell+1}$ grids point
constitute a tensor
$$
v^{\ell+1}\in \mathbb R^{m_{\ell+1}\times n_{\ell+1}}.
$$
We note that $\bm v\in\mathcal V_{\ell}$ thanks to \eqref{Vk} and the
nodal values of $\bm v$ on $\mathcal T_\ell$ 
constitute a tensor
$$
v^{\ell}\in \mathbb R^{m_{\ell}\times n_{\ell}}.
$$
Using the property of piecewise (bi-)linear functions, it is easy to see that 
\begin{equation}
\label{eq:13}
v^{\ell}=\bar P_{\ell+1}^\ell v^{\ell+1},  
\end{equation}
where 
\begin{equation}
\label{mg-prolong}
\bar P_{\ell+1}^\ell: \mathbb R^{m_{\ell+1}\times n_{\ell+1}}\mapsto  \mathbb R^{m_{\ell}\times n_{\ell}},
\end{equation}
which is called a prolongation in multigrid terminology.  More
specifically, 
\begin{equation}
\label{eq:7}
v^{\ell}_{2i-1,2j-1}=  v^{\ell+1}_{i,j},
\end{equation}
with 
\begin{equation}
\label{eq:9}
v^{\ell}_{2i-1, 2j} = \frac{1}{2}(v^{\ell+1}_{i,j} + v^{\ell+1}_{i,j+1}), \quad 
v^{\ell}_{2i, 2j-1} = \frac{1}{2}(v^{\ell+1}_{i,j} + v^{\ell+1}_{i+1,j}),
\end{equation}
and
\begin{equation}
%\begin{tiny}
%{\scriptsize 
v^{\ell}_{2i, 2j}  =  
\begin{cases}
\frac{1}{4}(v^{\ell+1}_{i,j} + v^{\ell+1}_{i+1,j} + v^{\ell+1}_{i,j+1} + v^{\ell+1}_{i+1,j+1}),  &\text{if $v^\ell$ is piecewise bilinear }, \\
\frac{1}{2}( v^{\ell+1}_{i+1,j} + v^{\ell+1}_{i,j+1}), &\text{if $v^\ell$ is piecewise linear }.
\end{cases}
%\end{tiny}
%}
\end{equation}
%when
%\begin{description}
%\item[$v^\ell$ is piecewise bilinear] 
%\begin{equation}
%\label{eq:pro-bilinear}
%v^{\ell}_{2i, 2j} = \frac{1}{4}(v^{\ell+1}_{i,j} + v^{\ell+1}_{i+1,j} + v^{\ell+1}_{i,j+1} + v^{\ell+1}_{i+1,j+1}),
%\end{equation}
%\item[$v^\ell$ is piecewise linear] 
%\begin{equation}
%\label{eq:pro-linear}
%v^{\ell}_{2i, 2j} = \frac{1}{2}( v^{\ell+1}_{i+1,j} + v^{\ell+1}_{i,j+1}).
%\end{equation}
%\end{description}

\subsection{Pooling, restriction and interpolation}\label{sec:cnn-restriction}
The prolongation given by \eqref{mg-prolong} can be used to transfer
feature from a coarse grid to a fine grid.   
On the other hand, we also
need a mapping, known as restriction,  that transfer data from fine grid to corse grid:
\begin{equation}
\label{mg-restrict}
\bar R_\ell^{\ell+1}: \mathbb R^{m_{\ell}\times n_{\ell}}  \mapsto  \mathbb R^{m_{\ell+1}\times n_{\ell+1}}.
\end{equation}
%This operation can often be obtained by two different steps. 
%First, we carry out a convolution operation $\tilde R: \mathbb R^{n_\ell\times n_\ell}\mapsto 
%\mathbb R^{n_\ell\times n_\ell}$:
%\begin{equation}
%\label{linear-restriction}
%\begin{aligned}
%(\tilde Rr)_{i,j}&=r_{i,j} +{1\over 2}(r_{i,j-1} + r_{i,j+1} + r_{i-1,j} + r_{i+1,j} + r_{i+1,j-1} + r_{i-1,j+1}) ,
%\end{aligned}
%\end{equation}
%Then, we define
%\begin{equation}
%\label{linear-restriction-stride}
%(Rr)_{i,j}=(\tilde R r)_{2i-1,2j-1}, \quad 1\le i, j \le n_{\ell+1}.
%\end{equation}
%%Using the terminology from deep learning \cite{goodfellow2017deep}, 
%As the definition in above section,
%we note that \eqref{linear-restriction} and \eqref{linear-restriction-stride} can be written as a
%convolution with a $3\times3$ kernel with stride 2:
%\begin{equation}\label{eq:restriction}
%Rr=K\ast_2 r,\quad K=\left ( \begin{array}{ccc}
%0 &\frac{1}{2}&\frac{1}{2}\\
%\frac{1}{2}& 1&\frac{1}{2}\\
%\frac{1}{2}&\frac{1}{2}& 0
%\end{array}\right ).
%\end{equation}
In multigrid for solving discretized partial differential equation,
the restriction is often taken to be transpose of the prolongation
given by \eqref{mg-prolong}:
\begin{equation}
\label{mg-RP}
\bar R_\ell^{\ell+1} = [\bar P_{\ell+1}^\ell]^T.  
\end{equation}
\begin{lemma}
	If $\tilde P_{\ell+1}^\ell$ takes the form of prolongation
	in multigrid methods for linear finite element functions 
	on the above grids, then $\tilde R_{\ell}^{\ell+1}$ is 
	a convolution with stride $2$ and a $3\times3$
	kernel as:
	\begin{equation}\label{restriction}
	R_\ell^{\ell+1} f=K_R\ast_2 f,
	\end{equation} 
	where, if $\mathcal V_\ell$ is piecewise bilinears, 
	\begin{equation}\label{bi-restrict}
	K_R=
	\begin{pmatrix}
	\frac{1}{4} &\frac{1}{2}&\frac{1}{4}\\
	\frac{1}{2}& 1&\frac{1}{2}\\
	\frac{1}{4}&\frac{1}{2}&  \frac{1}{4} 
	\end{pmatrix},
	\end{equation}
	or, if $\mathcal V_\ell$ is piecewise linears, 
	\begin{equation}
	\label{linear-restrict}
	K_R=
	\begin{pmatrix}
	0 &\frac{1}{2}&\frac{1}{2}\\
	\frac{1}{2}& 1&\frac{1}{2}\\
	\frac{1}{2}&\frac{1}{2}&  0
	\end{pmatrix}.
	\end{equation}
\end{lemma}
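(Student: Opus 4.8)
The plan is to exploit the defining relation $\bar R_\ell^{\ell+1}=[\bar P_{\ell+1}^\ell]^T$ from \eqref{mg-RP} together with the explicit nodal formulas \eqref{eq:7}, \eqref{eq:9} and the (unlabelled) corner formula for $v^\ell_{2i,2j}$ that together define the prolongation. Since the restriction is by definition the transpose of the prolongation, the matrix entry of $\bar R_\ell^{\ell+1}$ coupling a coarse node $(i,j)$ to a fine node $(p,q)$ equals the coefficient with which the coarse value $v^{\ell+1}_{i,j}$ enters the fine value $v^\ell_{p,q}$ under prolongation. Thus the first and main step is to invert the prolongation formulas: for a fixed coarse index $(i,j)$ I would enumerate every fine node $(p,q)$ whose value contains a term $v^{\ell+1}_{i,j}$, recording its coefficient.

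Carrying this out, $v^{\ell+1}_{i,j}$ occurs with weight $1$ in \eqref{eq:7} at the fine node $(2i-1,2j-1)$; it occurs with weight $\tfrac12$ in the two edge identities \eqref{eq:9}, both as the term indexed by $(i,j)$ and as the shifted neighbour term (indices $j-1$ or $i-1$), contributing to the four fine nodes $(2i-1,2j)$, $(2i-1,2j-2)$, $(2i,2j-1)$, $(2i-2,2j-1)$; and it occurs in the corner formula for $v^\ell_{2i,2j}$ at the diagonal fine nodes. The decisive observation is that all these weights depend only on the displacement of $(p,q)$ relative to the central fine node $(2i-1,2j-1)$ and not on $(i,j)$; this translation invariance is precisely what forces the transpose to be a convolution. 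Arranging the recorded weights by their offsets into a $3\times3$ array centred at $(2i-1,2j-1)$, and comparing with the stride-$2$ sampling $f_{2(i-1)+1+p,\,2(j-1)+1+q}$ in \eqref{stride}, identifies $\bar R_\ell^{\ell+1}$ as $K_R\ast_2 f$.

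The bilinear versus linear dichotomy enters only through the corner formula. In the bilinear case each of the four surrounding coarse nodes contributes $\tfrac14$, filling the four corner entries of \eqref{bi-restrict}; in the linear case only the two terms $v^{\ell+1}_{i+1,j}$ and $v^{\ell+1}_{i,j+1}$ survive with weight $\tfrac12$, so $v^{\ell+1}_{i,j}$ feeds the fine nodes $(2i-2,2j)$ and $(2i,2j-2)$ at weight $\tfrac12$ and the remaining two corners at weight $0$, giving the anti-diagonal pattern of \eqref{linear-restrict}. I expect the only real obstacle to be the index bookkeeping in the inversion step: each coarse value appears in as many as nine distinct prolongation identities, and each occurrence must be assigned to the correct slot of the $3\times3$ stencil; a useful internal check is that the resulting kernel is invariant under the $180^\circ$ rotation relating an operator to its convolutional transpose, which both versions of $K_R$ indeed satisfy. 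Once the stencil is tabulated, matching it against \eqref{bi-restrict}, \eqref{linear-restrict} and \eqref{stride} is immediate.
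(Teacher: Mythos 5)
Your proposal is correct. Note that the paper states this lemma without giving any proof at all, presenting it as an immediate consequence of the transpose relation \eqref{mg-RP}; your computation — enumerating, for a fixed coarse node, every fine node whose prolongation formula \eqref{eq:7}, \eqref{eq:9} or the (unlabelled) corner formula contains it, observing that the weights depend only on the offset from $(2i-1,2j-1)$, and matching the resulting stencil against the stride-$2$ sampling \eqref{stride} — is exactly the argument the surrounding text implies, with the bilinear/linear dichotomy and the $180^\circ$-rotation (kernel-flip) issue both handled correctly.
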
 
In addition, all these convolutions are applied with zero padding as in \eqref{eq:padding}
, which is consistent with the Neumann boundary condition for applying FEM to 
numerical PDEs. More details will be discussed in \S~\ref{sec:cnn-restriction}.

In the deep learning literature, the restriction such as
\eqref{mg-restrict} is often known as pooling operation.  One popular
pooling is a convolution with stride $s$, with some small integer $s>1$.  

Some other fixed (or untrained) poolings are also often used.  
One popular pooling is the so-called average pooling $R_{avr}$ which
can be a convolution with stride $2$ or bigger using the  kernel $K$
in the form of
\begin{equation}
\label{average-K}
K=
\frac{1}{9}\begin{pmatrix}
1 & 1 & 1 \\
1 & 1 & 1 \\
1 & 1 & 1
\end{pmatrix}. 
\end{equation}
Nonlinear pooling operator is also used, for the example the $(2k+1)
\times (2k+1)$ max-pooling operator with stride $s$ as follows:
\begin{equation}
[{R}_{\rm max}(f)]_{i,j} = \max_{-k\le p ,q\le k} \{f_{s(i-1)+1 + p, s(j-1)+1 +q} \}.
\end{equation}

Another approach to the construction of restriction of pooling can be 
obtained by using interpolation. 
Given 
$$
v^\ell\in \mathbb \mathbb R^{m_{\ell}\times n_{\ell}}, 
$$
let $\bm v\in \mathcal V_\ell$ be the function whose nodal values are
precisely give by $v^\ell$ as in \eqref{expand}.  
Any reasonable linear operator
\begin{equation}
\label{eq:11}
\Pi:  \mathcal V_\ell\mapsto \mathcal V_{\ell+1},
\end{equation}
such as: 
nodal value interpolation, 
Scott-Zhang interpolation and
$L^2$ projection \cite{xu2019FEM},
would give rise to a mapping
\begin{equation}
\label{mg-Pi}
\Pi_\ell^{\ell+1}: \mathbb R^{m_{\ell}\times n_{\ell}}  \mapsto
\mathbb R^{m_{\ell+1}\times n_{\ell+1}},
\end{equation}
such that
$$
v^{\ell+1}=\Pi_\ell^{\ell+1} v^\ell.
$$
As situations permit, we can use these a priori given restrictions to replace
unknown pooling operators to reduce the number of parameters.

\section{Multigrid methods for numerical PDEs}\label{sec:mg}
Let us first briefly describe a geometric multigrid method used to solve the 
following boundary value problem
\begin{equation}
\label{laplace}
-\Delta u = f,  \mbox{ in } \Omega,\quad
\frac{\partial u}{\partial {\bm n}} =0  \mbox{ on } \partial\Omega,\quad
\Omega=(0,1)^2.
\end{equation}

We consider a continuous linear finite element discretization of
\eqref{laplace} on a nested sequence of grids of sizes $n_\ell\times
n_\ell$ with $n_{\ell}=2^{J-\ell+1} + 1$, as shown in the left part of
Fig. \ref{mgrid} and the corresponding sequence of finite
element spaces \eqref{Vk}.
%Here we need to notice that, $n_\ell = 2^{k_\ell} + 1$ for general PDEs grid 
%with the above boundary condition. For general images, we can take them as
%discrete functions on grid with size $n_\ell = 2^{k_\ell}m$ with small $m = 1,3,\cdots$.
%Then generally speaking, the coarse grid size is $n_{\ell+1} = \frac{n_\ell}{2}=2^{k\ell - 1}m$.

Based on the grid $\mathcal T = \mathcal T_\ell$, the discretized system is
\begin{equation}
\label{laplace-h}
Au=f.
\end{equation}
Here,  $A:\mathbb R^{n\times n}\mapsto \mathbb R^{n\times n}$ is a tensor satisfying
\begin{equation}
\label{uniform-laplace}
(Au)_{i,j}=4u_{i,j}-u_{i+1,j}-u_{i-1,j}-u_{i,j+1}-u_{i,j-1},
\end{equation}
which holds for $1\le i,j \le n$ with zero padding. 
Here we notice that, there exists a $3\times 3$ kernel as
\begin{equation}\label{eq:kernel-A}
K_A = \begin{pmatrix}
0 & -1 & 0 \\
-1 & 4 & -1 \\
0 & -1 & 0
\end{pmatrix},
\end{equation}
with 
\begin{equation}\label{eq:convA}
Au = K_A \ast u.
\end{equation}
Where $\ast$ is the stander convolution operation with zero padding like \eqref{con1}. 
We now briefly describe a simple multigrid method by a mixed use of the terminologies from 
deep learning \cite{goodfellow2017deep} and multigrid methods.

The first main ingredient in GMG is a smoother.  A commonly used smoother is a
damped Jacobi with damped coefficient $\omega$ with $\omega \in (0,2)$,  which can be written as $S_{0}:\mathbb R^{n\times n}\mapsto
\mathbb R^{n\times n}$ satisfying
\begin{equation}
\label{jacobi1}
(S_{0}f)_{i,j}={\omega\over 4}f_{i,j},
\end{equation}
for equation \eqref{laplace-h} with initial guess zero.
If we apply the Jacobian iteration twice, then
$$
S_1(f) = S_{0} f + S(f - A(S_{0}f)),
$$
with element-wise form
\begin{equation} 
\begin{aligned}
\label{jacobi2}
[S_1(f)]_{i,j} &={1\over 4}\omega(2-\omega)f_{i,j} + {\omega^2\over 16}(f_{i+1,j}+f_{i-1,j}+f_{i,j+1}+f_{i,j-1}).
\end{aligned}
\end{equation}
Then we have 
\begin{equation}\label{eq:kernel-S}
K_{S_{0}} = {\omega \over 4},
\end{equation}
and 
\begin{equation}\label{eq:kernel-S2}
K_{S_1} = \begin{pmatrix}
0 & \frac{\omega^2}{16} & 0 \\
\frac{\omega^2}{16} & {\omega(2-\omega) \over 4} & \frac{\omega^2}{16}  \\
0 & \frac{\omega^2}{16}  & 0
\end{pmatrix},
\end{equation}
such that 
\begin{equation}\label{eq:convS}
S_{0}f = K_{S_{0}} \ast f \quad S_1 f = K_{S_1} \ast f.
\end{equation}
Similarly, we can define 
$S^{\ell}: \mathbb{R}^{n_\ell \times n_\ell} \mapsto \mathbb{R}^{n_\ell \times n_\ell}$.

We use prolongation $P_{\ell+1}^\ell: R^{n_{\ell+1}\times n_{\ell+1}}\mapsto R^{n_{\ell}\times n_{\ell}}$
as defined in \eqref{mg-prolong} and restriction $R_{\ell}^{\ell+1} = (P_{\ell+1}^{\ell})^T$. Further more,
we use the following relationship to define coarse operation
\begin{equation}\label{eq:def_coarse}
A^{\ell+1}=R_{\ell}^{\ell+1} A^{\ell}P_{\ell+1}^{\ell} \quad (\ell = 1:J-1),
\end{equation}
with $A^1 = A$.

% An important results in multilevel finite element methods is that 
%if we take the restriction as \eqref{eq:restriction} with prolongation as 
%the transposition of restriction, we will have $A^2$ is still a convolution operation 
%with $K_{A^2} = K_{A^1} = K_A$ as in \eqref{eq:kernel-A}.

Using the smoother $S^\ell$, prolongation $P^{\ell}_{\ell+1}$, restriction $R_{\ell}^{\ell+1}$ and mapping
$A^\ell$ as given in \eqref{eq:def_coarse}, we can formulate the following algorithm
as a major component of a multigrid algorithm.
\begin{breakablealgorithm}%[!htb]
	\caption{$(u^{\ell,\nu_\ell}: ~\ell = 1:J) = {\text{MG0}}(f; J,\nu_1, \cdots, \nu_J)$}
	\label{alg:L-Slash0}
	\begin{algorithmic}
		\State Set up
		$$
		f^1 = f, \quad u^{1,0}=0.
		$$
		\State Smoothing and restriction from fine to coarse level (nested)
		\For{$\ell = 1:J$}
		\State Pre-smoothing:
		\For{$i = 1:\nu_\ell$}
		\State
		\begin{equation}\label{eq:smoothing}
		u^{\ell,i} = u^{\ell,i-1} + S^{\ell} (f^\ell - A^\ell u^{\ell,i-1}).
		\end{equation}
		\EndFor
		\State Form restricted residual and set initial guess:
		$$
		u^{\ell+1,0} = 0, \quad f^{\ell+1} = R^{\ell+1}_\ell(f^\ell - A^{\ell} u^{\ell,\nu_\ell}).
		$$
		\EndFor
	\end{algorithmic}
\end{breakablealgorithm}

Using the above algorithm, there are different multigrid algorithms such as: $\backslash$-cycle, V-cycle and W-cycle.
Let us now only give one special form of multigrid algorithm as follows.
\begin{breakablealgorithm}%[!htb]
	\caption{$u = {\backslash\text{-MG}}(f; J,\nu_1, \cdots, \nu_J)$}
	\label{alg:L-Slash1}
	\begin{algorithmic}
		\State Call Algorithm \ref{alg:L-Slash0},
		$$
		(u^{\ell,\nu_\ell}: ~\ell = 1:J) = {\text{MG0}}(f; J,\nu_1, \cdots, \nu_J).
		$$
		\State Prolongation and restriction from coarse to fine level
		\For{$\ell = J-1:1$}
		\State Coarse grid correction (residual)
		\begin{equation}\label{eq:coarse-correstion}
		u^{\ell,\nu_\ell} \leftarrow u^{\ell,\nu_\ell} + P_{\ell+1}^{\ell}u^{\ell+1, \nu_{\ell+1}}.
		\end{equation}
		%		%		\IF{V-cycle}
		%		\For{$i = 1:\nu_\ell$}
		%		\State
		%		$$
		%		u^{\ell,i} \leftarrow u^{\ell,i-1} + [B^{\ell,i}]^T (f^\ell - A^{\ell} u^{\ell,i-1})
		%		$$
		%		\EndFor
		%		%		\ENDIF
		\EndFor
		\State Output
		$$
		u = u^{1,\nu_1}.
		$$
	\end{algorithmic}
\end{breakablealgorithm}

\section{MgNet: a new network structure}\label{sec:mgnet}
In this section, we introduce a new neural network structure,
named as MgNet, motivated by the multigrid algorithm, 
Algorithm \ref{alg:L-Slash0}, as discussed in the previous section.

%The main structure of both CNN and multigrid can be 
%understood as:
%\begin{itemize}
%	\item Solving an undetermined system in grid $\ell$.
%	\item Connection of different grids by restriction and prolongation.
%\end{itemize}

First, given the data-feature equation \eqref{Auf}, we consider
its restrictions to grid $\ell$ as follows:
\begin{equation}
\label{Auf-ell}
A^\ell(u^\ell) = f^\ell, \quad \ell=1:J,
\end{equation}
where
\begin{equation}
\label{f-ell}
f^{\ell}\in\mathbb R^{m_\ell\times n_\ell\times c_{f,\ell}},
\end{equation}
and 
\begin{equation}
\label{u-ell}
u^{\ell}\in\mathbb R^{m_\ell\times n_\ell\times c_{u,\ell}}.
\end{equation}
We are now in a position to state the main algorithm, namely
MgNet as:
\begin{breakablealgorithm}
	\caption{$u^J={\rm MgNet}(f; J,\nu_1, \cdots, \nu_J)$}
	\label{alg:mgnet}
	\begin{algorithmic}
		\State Initialization:  $f^1 = f_{\rm in}(f)$, $u^{1,0}=0$
		%		\State Initialization $u^{1,0}$
		\For{$\ell = 1:J$}
		\For{$i = 1:\nu_\ell$}
		\State Feature extraction (smoothing):
		\begin{equation}\label{mgnet}
		u^{\ell,i} = u^{\ell,i-1} + B^{\ell,i}  \left({f^\ell -  A^{\ell} (u^{\ell,i-1})}\right).
		\end{equation}
		\EndFor
		\State Note: 
		$
		u^\ell= u^{\ell,\nu_\ell} 
		$
		\State Interpolation and restriction:
		\begin{equation}
		\label{interpolation}
		u^{\ell+1,0} = \Pi_\ell^{\ell+1}u^{\ell}.
		\end{equation}
		\begin{equation}
		\label{restrict-f}
		f^{\ell+1} = R^{\ell+1}_\ell(f^\ell - A^\ell(u^{\ell})) + A^{\ell+1} (u^{\ell+1,0}).
		\end{equation}
		\EndFor
	\end{algorithmic}
\end{breakablealgorithm}

Here, $f_{\rm in}(\cdot)$ is the data initialization process as a usual step in many classical CNNs 
\cite{krizhevsky2012imagenet,he2016deep,he2016identity,huang2017densely}.
It may depend on different data sets and problems, we will discuss it later in \S~\ref{sec:ini-mgnet} and \S~\ref{sec:CNNs}.
For the main structure, the next diagram gives a brief illustration for the schema of
MgNet as shown in Algorithm \ref{alg:mgnet} with \eqref{linearA} and \eqref{extractor}.\\
\begin{figure}[H]
	\begin{center}
		\includegraphics[width=.4\textwidth, height=.32\textheight]{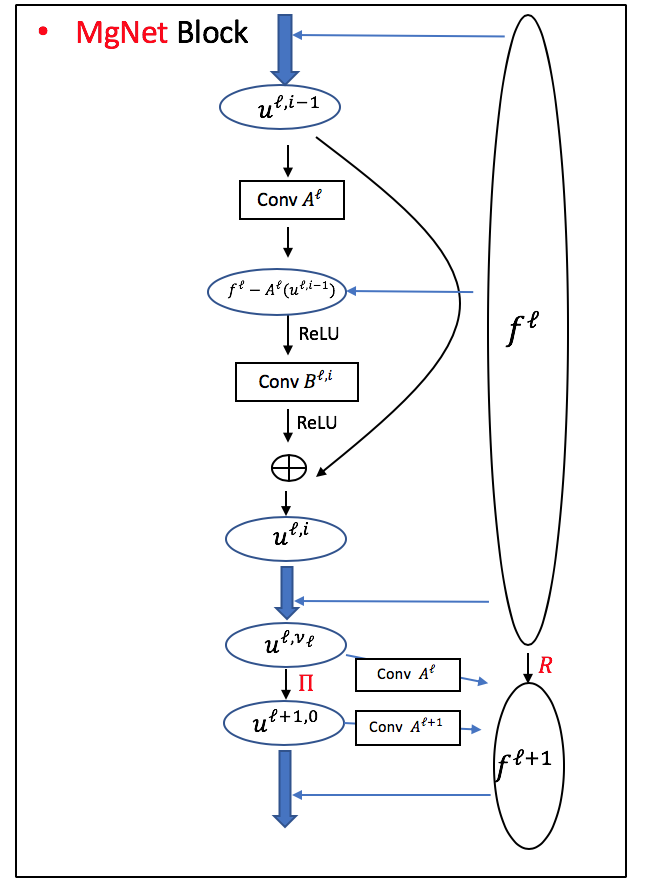} 
	\end{center}
	\caption{Structure of MgNet}
	\label{fig:mgnet}
\end{figure}

Here we may have some more general MgNet structures by replacing
the feature extraction (smoothing) step \eqref{mgnet} with some 
other iterative schemes such as:
\begin{description}
	\item[(Single step) MgNet] 
	\begin{equation}\label{single-step-mgnet}
	u^{\ell,i} = u^{\ell,i-1} + B^{\ell,i}  \left({f^\ell -  A^{\ell} (u^{\ell,i-1})}\right), \quad i = 1:\nu_\ell.
	\end{equation}
	\item[Multi-step MgNet]
	\begin{equation}\label{multi-step-mgnet}
	u^{\ell,i} = \sum_{j=0}^{i-1} \alpha^{\ell,i}_j \left( u^{\ell,j} + B^{\ell,i}_j  ( {f^\ell -  A^{\ell} (u^{\ell,j})} ) \right), \quad i = 1:\nu_\ell.
	\end{equation}
	\item[Chebyshev-semi MgNet]
	\begin{equation}\label{Chebyshev-semi-mgnet}
	u^{\ell, i} = \omega^{\ell,i}\left(u^{\ell,i-1} + B^{\ell,i}\left(f^\ell - A^\ell(u^{\ell,i-1})\right)\right)+ (1- \omega^{\ell,i}) u^{\ell,i-2}, \quad i = 1:\nu_\ell.
	\end{equation}
\end{description}
Where $B^{\ell,i}$ and $ B^{\ell,i}_j$ can be some appropriate nonlinear forms such as 
$\eqref{extractor}$ in the basic MgNet in Algorithm \ref{alg:mgnet}  which 
can relate to iResNet model naturally.
Roughly speaking, muli-step MgNet structure and Chebyshev-semi MgNet may be related to DenseNet \cite{huang2017densely}
and LM-ResNet \cite{lu2018beyond} with a special choice of the nonlinear form of $ B^{\ell,i}_j$ and $B^{\ell,i}$.
%\begin{breakablealgorithm}
%	\caption{$u^J=\text{Multi-step-}{\rm MgNet}(f; J,\nu_1, \cdots, \nu_J)$}
%	\label{alg:multi-step-mgnet}
%	\begin{algorithmic}
%		\State Initialization:  $f^1 =\theta(f)$, $u^{1,0}=0$
%		%		\State Initialization $u^{1,0}$
%		\For{$\ell = 1:J$}
%		\For{$i = 1:\nu_\ell$}
%		\State Feature extraction (smoothing):
%		\begin{equation}\label{multi-step-mgnet}
%		u^{\ell,i} = \sum_{k=0}^{i-1} \alpha^{\ell,i}_k \left( u^{\ell,k} + B^{\ell,i}_k  ( {f^\ell -  A^{\ell} (u^{\ell,k})} ) \right).
%		\end{equation}
%		\EndFor
%		\State Note:  $u^\ell= u^{\ell,\nu_\ell} $
%		\State Interpolation and restriction:
%		\begin{equation}
%		\label{interpolation-1}
%		u^{\ell+1,0} = \Pi_\ell^{\ell+1}u^{\ell}.
%		\end{equation}
%		\begin{equation}
%		\label{restrict-f-1}
%		f^{\ell+1} = R^{\ell+1}_\ell(f^\ell - A^\ell(u^{\ell})) + A^{\ell+1} (u^{\ell+1,0}).
%		\end{equation}
%		\EndFor
%	\end{algorithmic}
%\end{breakablealgorithm}

Let us focus on the basic MgNet form in Algorithms \ref{alg:mgnet}, 
the first important property of MgNet is that it recovers the fine to coarse process of multigrid methods
as in  Algorithm \ref{alg:L-Slash0}.
\begin{theorem}
	If $A^\ell$, $R_\ell^{\ell+1}$ and $B^{\ell,i} = S^{\ell}$ are all linear operations as described in multigrid method
	in \S~\ref{sec:mg}. 
	Then Algorithm \ref{alg:L-Slash0} is equivalent to Algorithm \ref{alg:mgnet} with any choice of $\Pi_\ell^{\ell+1}$.
\end{theorem}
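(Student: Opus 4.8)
The plan is to track the \emph{residuals} $r^{\ell,i} := f^\ell - A^\ell u^{\ell,i}$ in both algorithms and to show they coincide level by level and step by step; equivalence then follows because the residual is the only quantity communicated between grids. First I would observe that, since $B^{\ell,i}=S^\ell$ and $A^\ell,S^\ell$ are linear, the pre-smoothing sweep \eqref{eq:smoothing} (equivalently \eqref{mgnet}) induces, within each level, the recursion
\begin{equation}
r^{\ell,i} = f^\ell - A^\ell\bigl(u^{\ell,i-1}+S^\ell r^{\ell,i-1}\bigr) = (I-A^\ell S^\ell)\,r^{\ell,i-1},\quad i=1:\nu_\ell.
\end{equation}
The crucial point is that this recursion depends only on the \emph{initial residual} $r^{\ell,0}$, and not on $u^{\ell,0}$ or $f^\ell$ separately; hence it is identical in MG0 and in MgNet as soon as the two algorithms enter each level with the same $r^{\ell,0}$.

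Next I would establish that the cross-level rule for $r^{\ell,0}$ agrees. In MG0 one has $u^{\ell+1,0}=0$, so $r^{\ell+1,0}=f^{\ell+1}=R^{\ell+1}_\ell\bigl(f^\ell-A^\ell u^{\ell,\nu_\ell}\bigr)=R^{\ell+1}_\ell r^{\ell,\nu_\ell}$. In MgNet the defining formula \eqref{restrict-f} is arranged precisely so that the added term $A^{\ell+1}(u^{\ell+1,0})$ cancels upon forming the residual:
\begin{equation}
r^{\ell+1,0} = f^{\ell+1}-A^{\ell+1}u^{\ell+1,0} = R^{\ell+1}_\ell\bigl(f^\ell-A^\ell u^\ell\bigr) = R^{\ell+1}_\ell r^{\ell,\nu_\ell}.
\end{equation}
Thus both schemes obey the same inter-grid relation $r^{\ell+1,0}=R^{\ell+1}_\ell r^{\ell,\nu_\ell}$, and this cancellation is exactly what makes the choice of $\Pi_\ell^{\ell+1}$ irrelevant to the residuals. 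With the common base case $r^{1,0}=f^1-A^1\cdot 0 = f$ (identifying $f_{\rm in}$ with the identity on the input), a double induction on $\ell$ and $i$ gives $r^{\ell,i}_{\rm MgNet}=r^{\ell,i}_{\rm MG0}$ for all $\ell,i$.

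Finally I would translate equality of residuals into equivalence of the iterates. Summing the smoothing increments yields $u^{\ell,i}-u^{\ell,0}=\sum_{j=0}^{i-1}S^\ell r^{\ell,j}$; since the residuals coincide and MG0 takes $u^{\ell,0}=0$, this gives
\begin{equation}
u^{\ell,i}_{\rm MgNet} = u^{\ell,i}_{\rm MG0} + u^{\ell,0}_{\rm MgNet} = u^{\ell,i}_{\rm MG0} + \Pi_{\ell-1}^\ell u^{\ell-1}_{\rm MgNet},
\end{equation}
so the two feature sequences differ only by the interpolated initial guess, while the residuals, the restricted data passed to coarser grids, and hence the entire fine-to-coarse computation are identical. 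I expect the main obstacle to be conceptual rather than computational: pinning down the appropriate notion of equivalence (equality of residuals and of the smoothing corrections, with the iterates related by the $\Pi$-shift) and verifying that the extra term $A^{\ell+1}(u^{\ell+1,0})$ in \eqref{restrict-f} is exactly the compensation that renders the scheme independent of $\Pi_\ell^{\ell+1}$. Everything else is linear bookkeeping enabled by the linearity of $A^\ell$, $S^\ell$ and $R^{\ell+1}_\ell$.
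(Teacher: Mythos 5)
Your proof is correct and rests on the same two facts as the paper's, but it packages them differently. The paper runs a double induction directly on the iterates and data, proving the pair of identities $\tilde f^{\ell} = f^{\ell} + A^{\ell}\tilde u^{\ell,0}$ and $u^{\ell,i} = \tilde u^{\ell,i} - \tilde u^{\ell,0}$ (tildes denoting MgNet quantities, as in \eqref{eq:f-u}); the cancellation of the $A^{\ell+1}(u^{\ell+1,0})$ term in \eqref{restrict-f} is what propagates the first identity to level $\ell+1$, and an inner induction on $i$ propagates the second. You instead promote the residual to the primary object: it satisfies the same $\Pi$-independent recursion $r^{\ell,i}=(I-A^{\ell}S^{\ell})\,r^{\ell,i-1}$ within a level and $r^{\ell+1,0}=R^{\ell+1}_{\ell}r^{\ell,\nu_{\ell}}$ across levels in both algorithms, and the iterate relation $u^{\ell,i}_{\mathrm{MgNet}} = u^{\ell,i}_{\mathrm{MG0}} + u^{\ell,0}_{\mathrm{MgNet}}$ is recovered afterwards by summing the smoothing increments. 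The two invariant systems are equivalent under the linearity hypotheses, so this is the same underlying mechanism; what your residual-first organization buys is that the irrelevance of $\Pi_{\ell}^{\ell+1}$ is visible at a glance (a shift of the initial guess never enters the residual, and the residual is the only quantity passed to the coarser grid), while the paper's iterate-first organization delivers the explicit variable-to-variable correspondence --- the literal content of ``equivalence'' --- without the extra summation step. One point you should state explicitly rather than parenthetically: the base case forces the MgNet initialization $f_{\rm in}$ to be the identity (the paper does the same, ``taking $\theta=\mathrm{id}$''), since otherwise $r^{1,0}_{\mathrm{MgNet}} = f_{\rm in}(f)$ need not equal $f = r^{1,0}_{\mathrm{MG0}}$ and the induction cannot start.
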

\begin{proof}
	Here we replace $u^{\ell,i}$ and $f^{\ell}$ by $\tilde u^{\ell,i}$ and $\tilde f^{\ell}$ in MgNet. 
	What we want to prove are
	\begin{equation}\label{eq:f-u}
	\tilde f^{\ell} =  f^{\ell} + A_\ell \tilde u^{\ell,0} \quad \text{and} \quad u^{\ell,i} = \tilde u^{\ell, i} - \tilde u^{\ell, 0},
	\end{equation}
	with $u^{\ell,i}$, $f^\ell$ in Algorithm \ref{alg:L-Slash0} and 
	$\tilde u^{\ell,i}$, $\tilde f^\ell$ in Algorithm \ref{alg:mgnet} for any choice of $\Pi_\ell^{\ell+1}$. 
	We prove this result by induction. 
	\begin{itemize}
		\item It is easy to check that $\ell = 1$ is right by taking $\theta  = \rm{id}$. 
		\item Once the above equation \eqref{eq:f-u} is right for $\ell$, 
		let us prove the corresponded result for $\ell+1$.
		\begin{itemize}
			\item For $\tilde f^{\ell+1}$, as the definition in Algorithm \ref{alg:mgnet}, we have
			\begin{equation}
			\begin{aligned}
			\tilde f^{\ell+1} &= R_\ell^{\ell+1}(\tilde f^\ell - A^\ell \tilde u^{\ell,\nu_\ell}) + A^{\ell+1}\tilde u^{\ell+1,0} \\
			&= R_\ell^{\ell+1}( f^\ell + A^{\ell} \tilde u^{\ell,0}- A^{\ell} \tilde u^{\ell,\nu_\ell}) + A^{\ell+1}\tilde u^{\ell+1,0} \\
			&= R_\ell^{\ell+1}( f^\ell - A^{\ell} (\tilde u^{\ell,\nu_\ell}- u^{\ell,0})) + A^{\ell+1}\tilde u^{\ell+1,0} \\
			&= R_\ell^{\ell+1}(f^\ell - A^{\ell} u^{\ell,\nu_\ell}) + A^{\ell+1}\tilde u^{\ell+1,0} \\
			&=  f^{\ell+1} + A^{\ell+1}\tilde u^{\ell+1,0}.
			\end{aligned}
			\end{equation}
			\item For $u^{\ell+1,i}$, first we have 
			\begin{equation}
			u^{\ell+1,0} = 0 =\tilde u^{\ell+1, 0} - \tilde u^{\ell+1, 0},
			\end{equation}
			then we prove 
			\begin{equation}\label{u:i+1}
			u^{\ell+1,i} = \tilde u^{\ell+1, i} - \tilde u^{\ell+1, 0}
			\end{equation} by induction for $i$.
			
			We assume \eqref{u:i+1} holds for $0,1,\cdots,i-1$. Let us miner $\tilde u^{\ell+1, 0}$ in both sides of 
			the smoothing process \eqref{mgnet} in Algorithm \ref{alg:mgnet}. Then we have
			\begin{equation}
			\begin{aligned}
			\tilde u^{\ell+1,i} - \tilde u^{\ell+1, 0} &= \tilde u^{\ell+1,i-1} - \tilde u^{\ell+1, 0} + B^{\ell+1,i} (\tilde f^{\ell+1} - A^{\ell+1} \tilde u^{\ell+1,i-1}) \\
			&= \tilde u^{\ell+1,i-1} - \tilde u^{\ell+1, 0} + B^{\ell+1,i} (f^{\ell+1} + A^{\ell+1}\tilde u^{\ell+1,0} - A^{\ell+1} \tilde u^{\ell+1,i-1} )\\ 
			&= u^{\ell+1,i-1} + B^{\ell+1,i} (f^{\ell+1} - A^{\ell+1}u^{\ell+1,i-1} ).
			\end{aligned}
			\end{equation}
			This is exact the smoothing process in Algorithm \ref{alg:L-Slash0} as we take $ B^{\ell+1,i} = S^{\ell+1}$.
			%			\item At last, recall that $u^{1,0} = 0$, so the output of Algorithm \ref{alg:L-Slash0} 
			%			$$
			%			tilde u^{1, \nu_1} = u^{1, \nu_1},
			%			$$
			%			which is the output of Algorithm \ref{alg:L-Slash}. 
		\end{itemize}
	\end{itemize}
\end{proof}

Similar to Algorithm \ref{alg:L-Slash1} in $\backslash$-MG or 
the corresponding version in V-cycle multigrid, there exists a related 
V-MgNet that includes a process from coarse to fine grids. 
\begin{breakablealgorithm}
	\caption{$u^1 =\text{V-}{\rm MgNet}(f; J,\nu_1, \cdots, \nu_J; \nu'_1, \cdots, \nu'_J )$}
	\label{alg:mgnet1}
	\begin{algorithmic}
		\State 
		$$
		(\bar u^{1,0}, \bar u^1, f^1, \bar u^{2,0}, \bar u^2, f^2,\cdots, \bar u^{J,0},\bar u^J, f^J) = \text{MgNet}(f; J,\nu_1, \cdots, \nu_J).
		$$
		%\State Define $\bar u^{\ell, 0 } = u^{\ell,0}$ for $\ell = 1:J$.
		\For{$\ell = J-1 : 1$}
		%		\State  and $u^{\ell,\nu_\ell'} = u^{\ell}$ then
		\begin{equation}\label{V-prolongation}
		u^{\ell,0} \leftarrow \bar u^{\ell} + P_{\ell+1}^{\ell} (u^{\ell+1} - \bar u^{\ell+1,0}).
		\end{equation}
		%				f^\ell &= f^\ell - A^\ell \bar u^{\ell,0}+ A^\ell  u^{\ell,0}. 
		\For{$i = 1:\nu'_\ell$}
		\State 
		\begin{equation}\label{V-mgnet}
		u^{\ell,i} \leftarrow u^{\ell,i-1} + B'_{\ell,i}  ({f^\ell -  A^{\ell} (u^{\ell,i-1})}).
		\end{equation}
		\EndFor
		\State  
		$$
		u^{\ell} \leftarrow u^{\ell,\nu_\ell'} .
		$$
		\EndFor
	\end{algorithmic}
\end{breakablealgorithm}
This type of V-MgNet
makes use of prolongation operators that correspond directly to the co-called
deconvolution operations in CNN models \cite{noh2015learning}. In addition, the 
correction steps such as \eqref{V-prolongation} correspond directly to 
the symmetric skip connection in many autoencoder type models such as U-net \cite{ronneberger2015u}
and others \cite{mao2016image, liu2017when, lin2017feature}.  Furthermore, we can actually recover 
these U-net type CNN models from V-MgNet with similar situation as in MgNet and iResNet which 
we will discuss later in \S~\ref{sec:relation}.

Despite of the simplicity look of Algorithm \ref{alg:mgnet}, there
are rich mathematical structures and variants which we briefly discuss below.

\subsection{Initialization: feature space channels}\label{sec:ini-mgnet}
Initially for $\ell=1$,  we take $m_1 = m$ and $n_1 = n$ and we may define the linear mapping 
\begin{equation}
\label{eq:6}
\theta: \mathbb R^{m\times n\times c}
\mapsto \mathbb R^{m_1\times n_1\times c_1},
\end{equation}
to obtain $f^{1} = f_{\rm in}(f) = \theta(f)$ with $c$ given in \eqref{data-c} changed to the channel of the initial
data space to $c_1$.   Usually
\begin{equation}
\label{cc}
c_1\ge c.  
\end{equation}
One possibility is that we choose $c_1=c$.  In this case, we choose
$\theta=$identity.   But in general, we may need to choose $c_1\gg
c$. One possible advantage of preprocessing the RGB ($c=3$) to 
different color spaces is that we can better choose what kind of
features the CNN can detect, and under what 
conditions those detections will be invariant.

One possibility of understanding and modifying this step 
is to decompose the data $f$ into a number of more
specialized data
\begin{equation}
\label{decomp-f}
f=\sum_{k=1}^{c_1}\xi_kf^1_k  =\xi^Tf^1.
\end{equation}
We may use some knowledge from image processing or physics to
design a procedure to obtain the right decomposition of
\eqref{decomp-f}, or we can just train it. 
Conceivably, we may view $f^{1} = \theta(f)$ as a special approximation solution of
\eqref{decomp-f} with the same sparsity pattern to $\xi$. 

\subsection{Extracted Units: $u^{\ell}$ and channels}
The first new feature and the main new ingredient 
in the proposed neural network is the introduction 
of feature variables  $u^{\ell}$ in \eqref{u-ell}, which will be known
as the extracted units. 

One main ingredient in our MgNet in addition to the data variables 
is the introduction of feature variables $u^{\ell}$ in \eqref{u-ell}, 
known as the extracted-units. 
The so-called dual path networks (DPN) model in \cite{chen2017dual} also
makes use of additional variables. 
DPN is a special CNN obtained by combining two different CNN models such as
ResNet and DenseNet. 
If we view $u^{\ell,i}$ and $f^\ell$ as two different paths, MgNet can
be related to DPN model.
We note that, $u^{\ell,i}$ and $f^\ell$ communicate
to each other with a special version as in \eqref{mgnet} 
with a special restriction form as in \eqref{restrict-f}. 
We can recover DPN from MgNet by using two different
smoothing processes and combining them. 

We emphasize that the extracted-units $u^{\ell,i}$ and the data $f^\ell$ can have
different numbers of channels:
\begin{equation}
\label{uf-channels}
u^{\ell,i}\in \mathbb{R}^{m_\ell\times n_\ell \times c_{u,\ell}}, \quad
f^\ell\in \mathbb{R}^{m_\ell\times n_\ell \times c_{f,\ell} }.
\end{equation}
One possibility is that the number of channels for both $u$ and $f$ remain
unchanged in different grids:  
\begin{equation}
\label{cfl}
c_{f,\ell}=c_f, \quad \ell=1:J,   
\end{equation}
and 
\begin{equation}
\label{ufl}
c_{u,\ell}=c_{u}, \quad \ell=1:J.   
\end{equation}
Both $c_f$ and $c_{u}$ are two super-parameters that need to be tuned, 
and we may even take $c_u = c_f$.

\subsection{Poolings: $\Pi_{\ell}^{\ell+1}$ and $R_{\ell}^{\ell+1}$}
The pooling $\Pi_{\ell}^{\ell+1}$ in \eqref{restriction} and
$R_{\ell}^{\ell+1}$ in \eqref{restrict-f} are in general different.
They can be trained in general, but they may be a priori chosen.

There are many different possibilities to choose $\Pi_{\ell}^{\ell+1}$. 
The simplest choice of $\Pi_{\ell}^{\ell+1}$ is 
\begin{equation}
\label{eq:8}
\Pi_{\ell}^{\ell+1}=0.
\end{equation}
A more sophisticated choice can be obtained by considering an
interpolation from fine grid to coarse (that, for example preserves linear function
locally).  Namely
\begin{equation}
\label{Pi}
\Pi_{\ell}^{\ell+1}=\bar\Pi_{\ell}^{\ell+1}\otimes I_{c_\ell\times c_\ell} ,
\end{equation}
with $\bar\Pi_{\ell}^{\ell+1}$ given by~\eqref{mg-Pi}.
It can be implemented by group convolution \cite{zhang2017interleaved} 
with channels as groups number.

\subsection{Data-feature mapping: $A^{\ell}$}
The second new feature of MgNet is that this data-feature mapping
only depends on the grid ${\cal T}_\ell$, and it does not depend on layers
within the same grid.  This amounts to a significant saving of the number of
parameters especially for deep ResNet models.  In comparison, the existing CNN, such as iResNet, can be
interpreted as a network related to the case that $A^{\ell}$ is
replaced by $A^{\ell, i}$, namely
\begin{equation}\label{u-resnet}
u^{\ell,i} = u^{\ell,i-1} + B^{\ell,i}  (f^\ell -  A^{\ell,i} (u^{\ell,i-1}) ),
\end{equation}
which will be discussed later in \S~\ref{sec:relation}.

The data-feature mapping: $A^{\ell}$ can be either linear
\eqref{linearA}, or nonlinear \eqref{nonlinearA}.  The underlying
convolution kernels can be different on different grids and they can
all be trained.

\subsection{Feature extractors: $B^{\ell,i}$}
There are some freedoms in choosing these feature extrators.  
One common choice of extractors is given by \eqref{extractor}, namely 
\begin{equation}
\label{extractor-ell}
B^{\ell,i}=\sigma\circ \eta^{\ell,i}\circ\sigma.
\end{equation}

Other than the level dependent extractors, the following 
different strategies can be used
\begin{description}
	\item[Constant Extractors]: $B^{\ell,i}=B^{\ell}$ for   $i=1:\nu_\ell$.
	\item[Scaled Extractors]: $B^{\ell,i}=\alpha_iB^{\ell}$ for   $i=1:\nu_\ell$.
	\item[Variable Extractors]: $B^{\ell,i}$.
\end{description}

This brief framework gives us the basic principle on designing 
a CNN models for classification. All models are seen as the special
choice of data-feature mapping $A^\ell$, feature extractors $B^{\ell,i}$ 
and the pooling operators $\Pi_{\ell}^{\ell+1}$ with $R_{\ell}^{\ell+1}$.

\section{Some classic CNN models}\label{sec:CNNs}
In this section, we will use the notation introduced above to 
give a brief description of some classic CNN models.

\subsection{LeNet-5, AlexNet and VGG}
The  LeNet-5 \cite{lecun1998gradient}, AlexNet \cite{krizhevsky2012imagenet} and VGG \cite{simonyan2014very}
can be written as:
\begin{equation}
%	\begin{cases}
%	f^{1,0} &= \theta^0(f), \\
%	f^{\ell,i} &= \theta^{\ell,i} \circ \sigma (f^{\ell, j-1}), \quad i = 1:\nu_\ell ~\text{and}~ \ell = 1:J,\\
%	f^{\ell+1,0} &= R_\ell^{\ell+1}( f^{\ell,m+\ell}).  \\
%	\end{cases}
\begin{cases}
f^{1,0} &= \theta^0(f), \\
\text{\bf For} &\ell = 1:J \\
\quad &\text{\bf For} \quad i = 1:\nu_\ell \\
&f^{\ell,i} = \theta^{\ell,i} \circ \sigma (f^{\ell, j-1}), \\
\quad &\text{\bf EndFor} \\
\quad \quad f^{\ell+1,0} &= R_\ell^{\ell+1}( f^{\ell,m+\ell}), \\
\text{\bf EndFor} &\\
\end{cases}
\end{equation}
where $R_\ell^{\ell+1}$ can be general pooling operators and $\theta^{\ell,i}$ can be convolution with stride 1, 
or fully connected operators.  
Then the CNN model will be defined by
\begin{equation}\label{eq:cnndefine}
H_0(f) = f^{J,\nu_J}.
\end{equation}
In these three classic CNN models, they still need some 
extra fully connected layers after $H_0(f)$ but before the logistic regression \eqref{eq:log_reg}. 
These fully connected layers are removed in ResNet to be described below.
\subsection{ResNet}
The ResNet \cite{he2016deep} can be written as
\begin{equation}\label{ori-ResNet}
%	\begin{cases}
%	f^{1,0} &=R_{\rm max}\circ \sigma \circ \theta^0(f), \\
%	f^{\ell,i} &= \sigma \left( f^{\ell, i-1} + \mathcal{F}^{\ell, i} (f^{\ell,i-1}) \right), \quad i = 1:\nu_\ell ~\text{and}~ \ell = 1:J ,\\
%	f^{\ell+1,0} &= \sigma \left( R_\ell^{\ell+1} (f^{\ell, \nu_\ell} )+ \mathcal{F}^{\ell, 0} (f^{\ell, \nu_\ell} ) \right), \quad \ell = 1:J-1,\\
%	H_0(f) &=  R_{\rm ave}(f^{L,\nu_\ell}). \\
%	\end{cases}
\begin{cases}
f^{1,0} &=f_{\rm in}(f), \\
\text{\bf For} &\ell = 1:J \\
\quad &\text{\bf For} \quad i = 1:\nu_\ell \\
&f^{\ell,i} = \sigma \left( f^{\ell, i-1} + \mathcal{F}^{\ell, i} (f^{\ell,i-1}) \right), \\
\quad &\text{\bf EndFor} \\
\quad \quad f^{\ell+1,0} &= \sigma \left( R_\ell^{\ell+1} (f^{\ell, \nu_\ell} )+ \mathcal{F}^{\ell, 0} (f^{\ell, \nu_\ell} ) \right), \\
\text{\bf EndFor} &\\
H_0(f) &=  R_{\rm ave}(f^{L,\nu_\ell}). \\
\end{cases}
\end{equation}
Here $f_{\rm in}(\cdot)$ may depend on different data set and problems 
such as $f_{\rm in}(f) = \sigma \circ \theta^0(f)$ for CIFAR \cite{krizhevsky2009learning} and
$f_{\rm in}(f) = R_{\rm max}\circ \sigma \circ \theta^0(f)$ for ImageNet \cite{deng2009imagenet} as in \cite{he2016deep}.
In addition $\sigma \left( f^{\ell, i-1} + \mathcal{F}^{\ell, i} (f^{\ell,i-1}) \right)$ is often called the basic ResNet block with
$$
\mathcal{F}^{\ell,i} (f^{i-1}) = \xi^{i} \circ \sigma \circ \eta^{i} (f^{i-1}).
$$
Generally, $\xi^{\ell,i}$ and $\eta^{\ell,i}$ takes the form of \label{eq:conv-1} with zero padding and stride 1,
except, $\eta^{\ell,0}$  is taken as convolution with stride 2 with the same output dimension of $R_\ell^{\ell+1}$.

\subsection{iResNet} 
The iResNet \cite{he2016identity} can be written as:
\begin{equation}\label{eq:iResNet1}
%	\begin{cases}
%	f^{1,0} &=R_{\rm max}\circ \sigma \circ \theta^0(f), \\
%	f^{\ell,i} &= f^{\ell, i-1} + \mathcal{F}^{\ell, i} (f^{\ell,i-1}), \quad i = 1:\nu_\ell ~\text{and}~ \ell = 1:J ,\\
%	f^{\ell+1,0} &=  R_\ell^{\ell+1} (f^{\ell, \nu_\ell} )+ \mathcal{F}^{\ell, 0} (f^{\ell, \nu_\ell} ) , \quad \ell = 1:J-1,\\
%	H_0(f) &=  R_{\rm ave}(f^{L,\nu_\ell}). \\
%	\end{cases}
\begin{cases}
f^{1,0} &= f_{\rm in}(f), \\
\text{\bf For} &\ell = 1:J \\
\quad &\text{\bf For} \quad i = 1:\nu_\ell \\
&f^{\ell,i} = f^{\ell, i-1} + \mathcal{F}^{\ell, i} (f^{\ell,i-1}), \\
\quad &\text{\bf EndFor} \\
\quad \quad f^{\ell+1,0} &=  R_\ell^{\ell+1} (f^{\ell, \nu_\ell} )+ \mathcal{F}^{\ell, 0} (f^{\ell, \nu_\ell} ) , \\
\text{\bf EndFor} &\\
H_0(f) &=  R_{\rm ave}(f^{L,\nu_\ell}), \\
\end{cases}
\end{equation}
where $f_{\rm in}(\cdot)$ shares the same setup with ResNet but 
$$
\mathcal{F}^{\ell,i} (f^{\ell,i -1}) = \xi^{\ell,i} \circ \sigma \circ \eta^{\ell,i} \sigma (f^{\ell,i-1}).
$$
The only difference between ResNet and iResNet can be viewed as 
putting a $\sigma$ in different places. 

\subsection{DenseNet}
The DenseNet \cite{huang2017densely} model can be written as:
\begin{equation}\label{eq:densenet1}
\begin{cases}
f^{1,0} &= f_{\rm in}(f), \\
\text{\bf For} &\ell = 1:J \\
\quad &\text{\bf For} \quad i = 1:\nu_\ell \\
&f^{\ell,i} = \sigma \left( \sum_{j=0}^{i-1} [\theta^{\ell,i}]_{j} \ast f^{\ell,j} \right) ,\\
\quad &\text{\bf EndFor} \\
\quad \quad f^{\ell+1,0} &=  R_\ell^{\ell+1} ([f^{\ell,0,}, \cdots, f^{\ell,\nu_\ell}] ) , \\
\text{\bf EndFor} &\\
H_0(f) &=  R_{\rm ave}(f^{L,\nu_\ell}). \\
\end{cases}
\end{equation}
Here $[f^{\ell,0}, \cdots, f^{\ell,i}]$ represents the collection of 
all the previous output in $\ell$-th grids after $i$-th smoother in the channel dimension,
and
\begin{equation}
\theta^{\ell,i} = \left( [\theta^{\ell,i}]_{0}, \cdots,   [\theta^{\ell,i}]_{i-1}\right): 
\mathbb{R}^{m_\ell\times n_\ell \times (\sum_{j=0}^{i-1}k_j )} \mapsto  \mathbb{R}^{m_\ell\times n_\ell \times k_i},
\end{equation}
where $[\theta^{\ell,i}]_{j}: \mathbb{R}^{m_\ell\times n_\ell \times k_j} \mapsto  \mathbb{R}^{m_\ell\times n_\ell \times k_i}$ for $j = 0:i-1$.
Roughly speaking, the main iterative step in DenseNet is almost the same as the semi-iterative iterative 
process \eqref{eq:multi} if we ignore the nonlinear activation function $\sigma$ and the fix the channel dimension $k_j$.

In our paper, we mainly consider the connection between MgNet and ResNet type models from the viewpoint of 
single step (residual correction) iterative scheme. In addition, we also make some discussion about the
relationship between Multi-step MgNet and DenseNet using the idea of multi-iterative method.

The development of the first three models is often shown with next diagrams:
\begin{figure}[H]
	\begin{center}
		\includegraphics[width=.6\textwidth, height=.13\textheight]{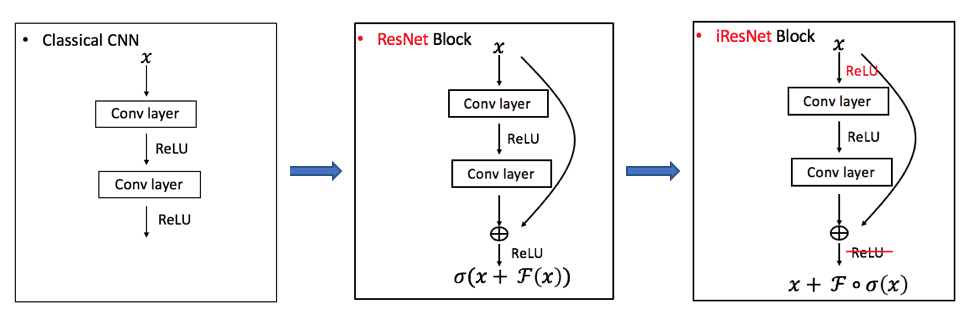} 
	\end{center}
	\caption{Comparison of CNN Structures}
\end{figure}

Without loss of generality, we extract the key 
feedforward steps on the same grid in different CNN models as follows.
\begin{description}
	\item[Classic CNN] 
	\begin{equation}\label{eq:cCNN}
	f^{\ell,i} = \xi^i \circ \sigma (f^{\ell,i-1}) \quad \text{or} \quad f^{\ell,i} = \sigma \circ \xi^{i} (f^{\ell,i-1}) .
	\end{equation}
	\item[ResNet] 
	\begin{equation}\label{eq:ResNet}
	f^{\ell,i} = \sigma( f^{\ell,i-1} + \xi^{\ell,i} \circ \sigma \circ \eta^{\ell,i}(f^{\ell,i-1})).
	\end{equation}
	\item[iResNet]
	\begin{equation}\label{eq:iResNet}
	f^{\ell,i} = f^{\ell,i-1} + \xi^{\ell,i} \circ \sigma \circ \eta^{\ell,i}\circ \sigma(f^{\ell,i-1}).
	\end{equation}
	\item[DenseNet]
	\begin{equation}\label{eq:DenseNet}
	f^{\ell,i} = \sigma \left( \sum_{j=0}^{i-1} [\theta^{\ell,i}]_{j} \ast f^{\ell,j} \right).
	\end{equation}
	
\end{description}

\section{Variants and generalizations of MgNet}\label{sec:relation}

%\subsection{Some properties of MgNet}
%So, this MgNet is corresponded to the multigrid methods 
%with iteration in function space. A natural idea is that there is also a dual version of 
%MgNet similar with the multigrid methods with iteration in dual space.
The MgNet model algorithm is one very basic and it can be generalized
in many different ways. It can also be used as a guidance to modify and 
extend many existing CNN models. 

The following result show how MgNet is related to he iResNet \cite{he2016identity}. 
\begin{theorem}\label{thm:mgnet1}
	The MgNet model Algorithm \ref{alg:mgnet}, 
	with $A=\xi^\ell$ and $B^{\ell,i}=\sigma \circ \eta^{\ell,i}\circ\sigma$, 
	admits the following identities
	\begin{equation}\label{dualmgnet}
	f^{\ell, i} = f^{\ell, i-1} -  \xi^{\ell} \circ \sigma \circ \eta^{\ell,i}\circ \sigma (f^{\ell,i-1}), \quad i = 1:\nu_\ell, \\
	\end{equation}
	where
	\begin{equation}
	\label{eq:5}
	f^{\ell,i} = f^{\ell} - \xi^{\ell} (u^{\ell,i}).   
	\end{equation}
	Furthermore, \eqref{dualmgnet} represents iResNet~\cite{he2016identity} 
	as shown in \eqref{eq:iResNet}.
\end{theorem}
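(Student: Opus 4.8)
The plan is to show that the feature-space iteration of MgNet induces, under the linear choice $A^\ell = \xi^\ell$, a closed recursion in the data (residual) variable $f^{\ell,i}$, and that this recursion is literally the iResNet update \eqref{eq:iResNet}. The bridge is the change of variables \eqref{eq:5}, namely $f^{\ell,i} = f^\ell - \xi^\ell(u^{\ell,i})$, which identifies $f^{\ell,i}$ with the $i$-th residual $f^\ell - A^\ell(u^{\ell,i})$ appearing inside the smoothing step. Everything is a direct computation; the content lies entirely in this substitution together with the linearity of $\xi^\ell$.

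First I would write the smoothing step \eqref{mgnet} with $A^\ell = \xi^\ell$ and observe that its argument is exactly the previous residual, $f^\ell - \xi^\ell(u^{\ell,i-1}) = f^{\ell,i-1}$ by \eqref{eq:5} at index $i-1$. Hence the update collapses to $u^{\ell,i} = u^{\ell,i-1} + B^{\ell,i}(f^{\ell,i-1})$. The key step is then to apply the linear map $\xi^\ell$ to both sides. Because $\xi^\ell$ is a convolution, hence linear as in \eqref{linearA}, this gives $\xi^\ell(u^{\ell,i}) = \xi^\ell(u^{\ell,i-1}) + \xi^\ell(B^{\ell,i}(f^{\ell,i-1}))$. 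Substituting $\xi^\ell(u^{\ell,i}) = f^\ell - f^{\ell,i}$ and $\xi^\ell(u^{\ell,i-1}) = f^\ell - f^{\ell,i-1}$ from \eqref{eq:5}, the common term $f^\ell$ cancels and I obtain $f^{\ell,i} = f^{\ell,i-1} - \xi^\ell(B^{\ell,i}(f^{\ell,i-1}))$. Inserting the assumed form $B^{\ell,i} = \sigma \circ \eta^{\ell,i} \circ \sigma$ produces exactly \eqref{dualmgnet}.

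For the final claim I would compare \eqref{dualmgnet} with the iResNet step \eqref{eq:iResNet}, $f^{\ell,i} = f^{\ell,i-1} + \xi^{\ell,i} \circ \sigma \circ \eta^{\ell,i}\circ \sigma(f^{\ell,i-1})$; the two coincide under the identification $\xi^{\ell,i} = -\xi^\ell$. There is no genuine analytic obstacle here — the calculation is one line once the change of variables is in hand. The point worth emphasizing instead is structural: MgNet forces the outer convolution $\xi^\ell$ to be \emph{shared across all steps} $i$ within a grid (it carries only the index $\ell$), whereas iResNet permits a step-dependent $\xi^{\ell,i}$, and the sign is immaterial since it is absorbed into the trained kernels. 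Thus MgNet is precisely iResNet subject to this weight-sharing constraint, which realizes the parameter saving advertised in \S~\ref{sec:mgnet}. The one hypothesis I would flag as essential is the linearity of $A^\ell = \xi^\ell$: without it, applying $A^\ell$ to the feature recursion would not commute through the update, and the clean data-space form \eqref{dualmgnet} would fail.
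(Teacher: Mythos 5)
Your proposal is correct and follows essentially the same route as the paper's proof: apply the linear map $\xi^\ell$ to the smoothing step \eqref{mgnet}, subtract from $f^\ell$, and invoke the change of variables \eqref{eq:5} to obtain \eqref{dualmgnet}. Your added remarks on the sign absorption ($\xi^{\ell,i}\leftrightarrow -\xi^\ell$) and on the weight-sharing constraint distinguishing MgNet from iResNet are consistent with the paper's subsequent discussion and Table \ref{comparison-ALL}.
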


\begin{proof}
	Because of the linearity of $\xi^\ell$ and invariant within the same grid $\ell$, 
	we can apply $\xi^\ell$ on both sides of \eqref{mgnet} and minus with
	$f^\ell$, thus we have
	$$
	f^{\ell} - \xi^{\ell} (u^{\ell,i})= f^{\ell} - \xi^\ell(u^{\ell,i-1}) -
	\xi^{\ell} \circ \sigma \circ \eta^{\ell,i}\circ \sigma (f^\ell - \xi^\ell(u^{\ell,i-1})).
	$$
	This finish the proof with definition in \eqref{eq:5}.
\end{proof}

The above result is very simple but critically important.
In view of Theorem \ref{thm:mgnet1}, it shows how multigrid and 
CNN are intimately related. Furthermore, it provides a different version
of iResNet, which can be viewed as the dual version of the original iResNet.
This relation is quit similar with the dual relation of $u$ and $f$
in multigrid method \cite{xu2017algebraic}.
\begin{lemma}\label{thm:mgnet2} 
	The ResNet~\cite{he2016deep} step
	as in \eqref{eq:ResNet} 
	%	\begin{equation}\label{resnet}
	%	f^{\ell,i} = \sigma( f^{\ell, i-1} - \xi^{\ell,i} \circ \sigma \circ \eta^{\ell,i} (f^{\ell,i-1}) ).
	%	\end{equation}
	admits the following relation:
	%(which resembles closely with \eqref{dualmgnet}) 
	\begin{equation}\label{tilde-resnet}
	\tilde f^{\ell,i} =\sigma(\tilde f^{\ell,i-1}) -
	\xi^{\ell,i} \circ \sigma \circ \eta^{\ell,i}\circ \sigma( \tilde f^{\ell,i-1}),
	\end{equation}
	where
	\begin{equation}\label{tilde-f}
	\tilde f^{\ell,i} = f^{\ell, i-1} -\xi^{\ell,i} \circ \sigma \circ \eta^{\ell,i} (f^{\ell,i-1}).
	\end{equation}
\end{lemma}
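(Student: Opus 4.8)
The plan is to mirror the duality computation used for Theorem \ref{thm:mgnet1}, now accounting for the extra outer activation that distinguishes ResNet \eqref{eq:ResNet} from iResNet \eqref{eq:iResNet}. Writing $\mathcal F^{\ell,i}=\xi^{\ell,i}\circ\sigma\circ\eta^{\ell,i}$, the ResNet step \eqref{eq:ResNet} reads $f^{\ell,i}=\sigma\bigl(f^{\ell,i-1}+\mathcal F^{\ell,i}(f^{\ell,i-1})\bigr)$, so the variable $\tilde f^{\ell,i}$ introduced in \eqref{tilde-f} is, up to the sign of the trained kernel $\xi^{\ell,i}$, nothing but the \emph{pre-activation} argument of the outer $\sigma$ in this step. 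As in Theorem \ref{thm:mgnet1}, where the minus sign in the derived identity still ``represents'' the plus sign of \eqref{eq:iResNet}, the sign carried by $\xi^{\ell,i}$ is immaterial because it is a learned convolution; I would fix this sign convention at the outset and then treat $\tilde f^{\ell,i}$ as exactly the quantity inside the outer $\sigma$.

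The crux is the single identity
\begin{equation}
f^{\ell,i-1}=\sigma(\tilde f^{\ell,i-1}),
\end{equation}
asserting that the ResNet output at step $i-1$ is the activation applied to the preceding pre-activation variable. With the identification above this is immediate: it is just the outer $\sigma$ in the $(i-1)$-st instance of \eqref{eq:ResNet}.

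With this identity in hand the rest is a one-line substitution into the defining relation \eqref{tilde-f}. The inner composition $\eta^{\ell,i}(f^{\ell,i-1})$ becomes $\eta^{\ell,i}\circ\sigma(\tilde f^{\ell,i-1})$ — and this is precisely where the extra activation appearing in the target \eqref{tilde-resnet} is generated — while the leading term $f^{\ell,i-1}$ becomes $\sigma(\tilde f^{\ell,i-1})$. Collecting the two substitutions yields
$$
\tilde f^{\ell,i}=\sigma(\tilde f^{\ell,i-1})-\xi^{\ell,i}\circ\sigma\circ\eta^{\ell,i}\circ\sigma(\tilde f^{\ell,i-1}),
$$
which is exactly \eqref{tilde-resnet}.

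The substitution itself is routine; the only genuine obstacle is the sign and orientation bookkeeping, namely verifying that $\tilde f^{\ell,i}$ is correctly aligned with the pre-activation argument of the outer $\sigma$ so that the key identity $f^{\ell,i-1}=\sigma(\tilde f^{\ell,i-1})$ is actually forced, and confirming that the residual sign attached to $\xi^{\ell,i}$ is absorbed by the same $\pm\xi$ convention used in Theorem \ref{thm:mgnet1}. Once that alignment is settled, the derivation of \eqref{tilde-resnet} from \eqref{tilde-f} is immediate.
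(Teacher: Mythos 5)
Your proposal is correct and follows essentially the same route as the paper's proof: both rest on the identification $f^{\ell,i-1}=\sigma(\tilde f^{\ell,i-1})$ (with the residual sign absorbed into the trained kernel $\xi^{\ell,i}$, exactly as the paper implicitly does) followed by substitution of this identity into the defining relation \eqref{tilde-f}. Your version is if anything slightly cleaner, since it makes the sign convention explicit and substitutes directly rather than first applying $\xi^{\ell,i+1}\circ\sigma\circ\eta^{\ell,i+1}$ to both sides as the paper does.
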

\begin{proof}
	%	Now, we will establish the connection between classical ResNet and MgNet. 
	First, we apply $ \xi^{\ell,i+1} \circ \sigma \circ \eta^{\ell,i+1}$ 
	on the both sides of \eqref{eq:ResNet} and get
	\begin{equation}\label{resnet1}
	\xi^{\ell,i+1} \circ \sigma \circ \eta^{\ell,i+1}( f^{\ell,i} ) = 
	\xi^{\ell,i+1} \circ \sigma \circ \eta^{\ell,i+1}\circ \sigma( \tilde f^{\ell,i} ).
	\end{equation}
	Minus by $f^{\ell,i}$ on the both sides and recall the definition in \eqref{tilde-f}, we have
	\begin{equation*}
	\tilde f^{\ell,i+1} = f^{\ell,i} - \xi^{\ell,i+1} \circ \sigma \circ \eta^{\ell,i+1}\circ \sigma( \tilde f^{\ell,i}).
	\end{equation*}
	By the definition of $f^{\ell,i} = \sigma(\tilde f^{\ell,i})$, we finish this proof.
\end{proof}

We call the above form \eqref{tilde-resnet} as
$\sigma$-ResNet, similar to the MgNet we replace $\xi^{\ell,i}$ by $\xi^{\ell}$  and get 
the next Mg-ResNet form as:
\begin{equation}\label{mg-resnet}
f^{\ell,i} =\sigma(f^{\ell,i-1}) -
\xi^{\ell} \circ \sigma \circ \eta^{\ell,i}\circ \sigma(f^{\ell,i-1}).
\end{equation}

If we take these pooling and prolongation operators
as discussed in the previous sections and focus on 
the iterative forms on a certain grid $\ell$, we may
compare them as:
\begin{table}[!htbp]
	\renewcommand\arraystretch{1.2}
	\caption{Comparison for MgNet and ResNet type iterative forms }
	\label{comparison-ALL}
	\begin{center}\scriptsize
		\resizebox{.9\textwidth}{!}{
			\begin{tabular}{|c|c|c|}
				\hline
				Primal-Dual & Model & Iterative forms \\
				\hline
				\multirow{5}{*}{Feature space} & Abstract-MgNet & Solving $A^\ell(u^\ell) = f^\ell$ \\
				\cline{2-3}
				& Single step MgNet & $u^{\ell,i} = u^{\ell, i-1} + B^{\ell,i} (f^\ell - A^{\ell}(u^{\ell,i-1}))$ \\
				\cline{2-3}
				& Multi-step MgNet & $u^{\ell,i} = \sum_{j=0}^{i-1} \alpha^{\ell,i}_j ( u^{\ell,j} + B^{\ell,i}_j  ( {f^\ell -  A^{\ell} (u^{\ell,j})} ) )$ \\
				\cline{2-3}
				& Chebyshev-semi MgNet & $u^{\ell, i} = \omega^{\ell,i}(u^{\ell,i-1} + 
				B^{\ell,i}(f^\ell- A^\ell(u^{\ell,i-1})))+ (1- \omega^{\ell,i}) u^{\ell,i-2}$ \\
				\cline{2-3} 
				& {MgNet} & $u^{\ell,i} = u^{\ell, i-1} + \sigma \circ \eta^{\ell,i}\circ \sigma (f^\ell - \xi^{\ell}(u^{\ell,i-1}))$ \\
				\hline
				\multirow{5}{*}{Data space} & iResNet & $ f^{\ell,i} = f^{\ell, i-1} -  \xi^{\ell,i} \circ \sigma \circ \eta^{\ell,i} \circ \sigma (f^{\ell,i-1})$ \\
				\cline{2-3}
				& Mg-iResNet & $f^{\ell,i} = f^{\ell, i-1} -  \xi^{\ell} \circ \sigma \circ \eta^{\ell,i}\circ \sigma (f^{\ell,i-1})$ \\
				\cline{2-3}
				& Mg-ResNet & $f^{\ell,i} = \sigma(f^{\ell,i-1}) - \xi^{\ell} \circ \sigma \circ \eta^{\ell,i}\circ \sigma( f^{\ell,i-1})$ \\
				\cline{2-3}
				& $\sigma$-ResNet & $f^{\ell,i} = \sigma(f^{\ell,i-1}) - \xi^{\ell,i} \circ \sigma \circ \eta^{\ell,i}\circ \sigma( f^{\ell,i-1})$ \\
				\cline{2-3}
				& ResNet & $f^{\ell,i} = \sigma(f^{\ell, i-1} -  \xi^{\ell,i} \circ \sigma \circ \eta^{\ell,i} (f^{\ell,i-1}))$ \\
				\hline
			\end{tabular} 
		}
	\end{center}
\end{table}

We can have these connections for all iterative scheme in data space:
\begin{equation}
\text{ ResNet} \xleftrightarrow{\eqref{tilde-f}} \sigma\text{-ResNet } \xleftrightarrow{\xi^{\ell,i} \leftrightarrow \xi^{\ell}} \text{Mg-ResNet}  
\xleftrightarrow{\sigma(f^{\ell,i-1}) \leftrightarrow f^{\ell, i-1} } \text{Mg-iResNet} \xleftrightarrow{ \xi^{\ell} \leftrightarrow \xi^{\ell,i}} \text{iResNet}.
\end{equation}

%Because of the linearity of $\xi^{\ell,i}$, the above forms of iResNet and ResNet
%are equivalent to the previous as \eqref{dualmgnet} and \eqref{tilde-resnet}.

In this sense, these MgNet related models can be understood as
models between iResNet and ResNet. And all these models can be
understood as iteration in the data space as a dual relationship with
feature space as MgNet.

The rationality of replacing  $\xi^{\ell,i}$ by layer independent $\xi^{\ell}$ may
be justified by the following theorem. 
\begin{theorem}\label{thm:CNN}
	On each grid $\mathcal T_\ell$, 
	\begin{enumerate}
		\item Any CNN model with
		%	CNN and Mg-ResNet] 
		\begin{equation}
		\label{CNN1}
		f^{\ell,i} =   \chi^{\ell,i} \circ \sigma (f^{\ell,i-1}),
		\end{equation} 
		can be written as
		\begin{equation}\label{Res-CNN1}
		f^{\ell,i} = \sigma(f^{\ell,i-1}) - \xi^{\ell} \circ \sigma \circ \eta^{\ell,i} \circ\sigma ( f^{\ell,i-1}).
		\end{equation}
		\item Any CNN model with 
		%	[CNN and ResNet] 
		\begin{equation}
		\label{CNN2}
		f^{\ell,i} =   \sigma\circ\chi^{\ell,i} (f^{\ell,i-1}).
		\end{equation}
		can be written as 
		\begin{equation}\label{Res-CNN2}
		f^{\ell,i} = \sigma\left(f^{\ell,i-1} - \xi^{\ell} \circ \sigma \circ \eta^{\ell,i}  ( f^{\ell,i-1})\right).
		\end{equation}
	\end{enumerate}
	
	%\begin{equation}
	% \chi^{\ell,i}: \mathbb{R}^{n_\ell \times n_\ell \times c_\ell} 
	% \mapsto \mathbb{R}^{n_\ell \times n_\ell \times c_\ell},
	%\end{equation}
\end{theorem}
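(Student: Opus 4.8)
The plan is to reduce both statements to the elementary ReLU identity $t = \sigma(t) - \sigma(-t)$, valid for every scalar $t$ and hence, applied entrywise, for every tensor. The whole content of the theorem is that all the $i$-dependence can be loaded into the inner convolution $\eta^{\ell,i}$ (at the price of doubling its output channels), leaving $\xi^{\ell}$ as a \emph{fixed}, layer-independent map; this is exactly what is needed to justify replacing $\xi^{\ell,i}$ by $\xi^{\ell}$.

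For part 1 I would abbreviate $g = \sigma(f^{\ell,i-1})$ and observe that matching \eqref{CNN1} with \eqref{Res-CNN1} is equivalent to the single linear identity
\[
\xi^{\ell}\circ\sigma\circ\eta^{\ell,i}(g) = \big(I - \chi^{\ell,i}\big)(g),
\]
for every $g$ in the range of $\sigma$. Here $L^{\ell,i} := I - \chi^{\ell,i}$ is again an affine convolution, since the identity $I$ is the convolution with the delta kernel (embedded, if necessary, into the same kernel size as $\chi^{\ell,i}$) and $\chi^{\ell,i}$ is a convolution. I then fix $\xi^{\ell}$ once and for all to be the channel-subtraction map $(a,b)\mapsto a-b$ (a $1\times1$ convolution acting on two channel groups), and I choose $\eta^{\ell,i}$ to be the double-channel convolution $g \mapsto (L^{\ell,i}(g),\,-L^{\ell,i}(g))$. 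Applying $\sigma$ entrywise and then $\xi^{\ell}$ gives $\sigma(L^{\ell,i}(g)) - \sigma(-L^{\ell,i}(g)) = L^{\ell,i}(g)$ by the ReLU identity, which is precisely the required equality.

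Part 2 is the same construction with the bookkeeping shifted outside the activation. Setting $f = f^{\ell,i-1}$, it suffices to match the \emph{arguments} of the outer $\sigma$ in \eqref{CNN2} and \eqref{Res-CNN2}, i.e. to establish $\xi^{\ell}\circ\sigma\circ\eta^{\ell,i}(f) = (I-\chi^{\ell,i})(f)$; equal arguments yield equal outputs under $\sigma$. The identical choice of $\xi^{\ell}$ (subtraction) and $\eta^{\ell,i} = (L^{\ell,i},\,-L^{\ell,i})$ closes the argument, now without the inner $\sigma$ in front of $\eta^{\ell,i}$.

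The steps are routine once the ReLU identity is in hand; the only real content — and the point deserving care — is confirming that $\xi^{\ell}$ may be taken independent of $i$. This hinges on pushing all layer dependence into $\eta^{\ell,i}$ through the $\pm L^{\ell,i}$ construction and on accepting the attendant channel doubling, together with checking that $I-\chi^{\ell,i}$ is genuinely realizable as a convolution with bias (so that an affine $\chi^{\ell,i}$, including its bias term, is handled) and that it is channel-preserving, which holds under the standing assumption of a constant channel count within each grid. I would close by remarking that this is exactly the mechanism behind the claim that the layer-independent data-feature map $\xi^{\ell}=A^{\ell}$ in MgNet loses no expressive power relative to the layer-dependent $\xi^{\ell,i}$ of iResNet.
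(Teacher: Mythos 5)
Your proof is correct and takes essentially the same route as the paper's: both push all layer dependence into a channel-doubled $\eta^{\ell,i}$ built from $\pm(I-\chi^{\ell,i})$ (the paper writes this as $[{\rm id}_{c_\ell},-{\rm id}_{c_\ell}]\circ(\chi^{\ell,i}-{\rm id}_{c_\ell})$), keep $\xi^{\ell}$ as a fixed $1\times 1$-type combination of the two channel groups, and close the argument with the ReLU identity. Your version is in fact cleaner on signs: the paper invokes the identity as $\sigma(x)+\sigma(-x)=x$ (a typo for $\sigma(x)-\sigma(-x)=x$, which its own computation implicitly uses), whereas your subtraction map $\xi^{\ell}(a,b)=a-b$ applies the correct identity directly.
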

\begin{proof}
	%	Without loss of generality, consider the classical CNN with 
	%	\begin{equation}
	%	f^{\ell,i} =   ({\rm id}  + \tilde \eta^{\ell,i} )\circ \sigma (f^{\ell,i-1}).
	%	\end{equation}
	Let use prove the first case as an example, 
	the second case can be proven with the same process.
	
	With similar structure in MgNet, we can take
	\begin{equation}
	\label{xi-cnn1}
	\xi^{\ell}= \hat \delta^\ell := [\hat \delta_1, \cdots, \hat \delta_{{c_\ell}}],
	\end{equation}
	and 
	\begin{equation}
	\label{eta-ell}
	\eta^{\ell,i} = [{\rm id}_{c_\ell}, -{\rm id}_{c_\ell}] \circ (\chi^{\ell,i} - {\rm id}_{c_\ell}).
	\end{equation}
	Here 
	\begin{equation}
	{\rm id}_{c_\ell}: \mathbb{R}^{n_\ell \times n_\ell \times c_\ell} 
	\mapsto \mathbb{R}^{n_\ell \times n_\ell \times c_\ell},
	\end{equation}
	is the identity map and 
	\begin{equation}
	\hat \delta_k :  \mathbb{R}^{n_\ell \times n_\ell \times 2c_\ell} 
	\mapsto \mathbb{R}^{n_\ell \times n_\ell},
	\end{equation}
	with 
	\begin{equation}\label{eq:hatdelta}
	\hat \delta_k([X ,Y]) = -([X]_k + [Y]_k),
	\end{equation}
	for any $X, Y \in \mathbb{R}^{n_\ell \times n_\ell \times c_\ell}$ 
	and $[X,Y] \in \mathbb{R}^{n_\ell \times n_\ell \times 2c_\ell} $.

	First, we see that $\eta^{\ell,i}$ with the above 
	form is a convolution from $\mathbb{R}^{n_\ell \times n_\ell \times c_\ell}$
	to  $\mathbb{R}^{n_\ell \times n_\ell \times 2c_\ell}$.
	Following the identity
	\begin{equation}
	ReLU(x) + ReLU(-x) = x,
	\end{equation}
	and the definition of $\xi^{\ell}$ i.e. 
	\begin{equation}
	\xi^{\ell} = \hat \delta^\ell,
	\end{equation}
	as a special case in MgNet. 
	For more details, we can give a exact form of 
	$\hat \delta_k$ as in \eqref{eq:hatdelta} with
	\begin{equation}
	\hat \delta_k = [0, \cdots,0, -\delta, \cdots 0;  0, \cdots,0, -\delta, \cdots 0],  \quad k = 1:{c_\ell},
	\end{equation}
	where $\delta$ is the identity kernel in one channel.
	
	Furthermore, we have
	\begin{equation}
	\begin{aligned}
	\left[\xi^{\ell} \circ \sigma \circ [{\rm id}_{c_\ell}, -{\rm id}_{c_\ell}] (x) \right]_k &=  \left[\xi^{\ell} \circ \sigma \circ [x, -x]  \right]_k \\
	&= \hat \delta_k ( [\sigma(x), \sigma(-x)])  \\
	&= -\delta([\sigma(x)]_k) - \delta([\sigma(-x)]_k)\\
	&=-( \sigma([x]_k)+ \sigma(-[x]_k)) \\
	&=  -[x]_k.
	\end{aligned}
	\end{equation}
	Thus to say,
	\begin{equation}
	\xi^{\ell} \circ \sigma \circ [{\rm id}_{c_\ell}, -{\rm id}_{c_\ell}]  = -{\rm id}_{c_\ell}.
	\end{equation}
	Then the modified dual form of MgNet in \eqref{tilde-resnet} becomes
	\begin{equation}
	\begin{aligned}
	f^{\ell,i} &= \sigma(f^{\ell,i-1}) - \xi^{\ell,i} \circ \sigma \circ \eta^{\ell,i} \circ\sigma ( f^{\ell,i-1}) \\
	&=  \sigma(f^{\ell,i-1}) - \left( \xi^{\ell} \circ \sigma \circ [{\rm id}_{c_\ell}, -{\rm id}_{c_\ell}] \right) 
	\circ (\chi^{\ell,i} - {\rm id}_{c_\ell})\circ \sigma(f^{\ell,i-1})\\
	&=\sigma(f^{\ell,i-1}) + (\chi^{\ell,i} -{\rm id}_{c_\ell})\circ \sigma(f^{\ell,i-1})  \\
	&=\chi^{\ell,i} \circ  \sigma (f^{\ell,i-1}).
	\end{aligned}
	\end{equation}
	This covers \eqref{Res-CNN1}.
\end{proof}

\begin{remark}
	Theorems~\ref{thm:CNN} shows that general CNN in
	the forms of either \eqref{CNN1} or \eqref{CNN2} can be written recast
	as \eqref{Res-CNN1} or \eqref{Res-CNN2} with the data-feature mapping 
	$A^\ell=\xi^\ell$ that is not only independent of the layers, but is
	actually given a priori as in \eqref{xi-cnn1}.  In
	view of Theorems~\ref{thm:mgnet1} and \ref{thm:CNN}, the classic
	CNN models can be essentially recovered from MgNet by choosing
	$\xi^\ell$ a priori as in  \eqref{xi-cnn1}.  
	We believe that general and well-defined mathematical structure of MgNet would
	provide mathematical insights for understanding and developing these CNN models.
\end{remark}

%At last we have the next relation:
%\input{MgNet-relation.tex}

\section{Numerical experiments}\label{sec:numerics}
In this section, we present some numerical results to illustrate the
efficiency and potential of MgNet as described in Algorithm
\ref{alg:mgnet}.

\subsection{Data sets and model structure }
We choose CIFAR-10 and CIFAR-100 
\cite{krizhevsky2009learning}
as two data sets for numerical tests. 
Here, the CIFAR-10 dataset consists of 60000 32x32 color 
images in 10 classes, with 6000 images per class. 
The CIFAR-100 dataset is just like the CIFAR-10, 
except it has 100 classes containing 600 images each. 
We split these two data sets with 50000 training images 
and 10000 test images.

We will mainly carry out
a comparison with study between MgNet and  ResNet \cite{he2016deep} 
on these two data sets, so we choose some 
similar process techniques in ResNet such as there will 
be a average pooling before linear regression
layers:
\begin{equation}\label{eq:ave-pooling}
R_{ave}: \mathbb{R}^{m_{J-1} \times n_{J-1} \times c_{J-1}} \mapsto \mathbb{R}^{c_{J-1}}.
\end{equation}
Here, we can recover this average operator by taking $\nu_{J} = 0$ in MgNet and
$$
u^{J} = u^{J,0} = \Pi_{J-1}^J u^{J-1, \nu_{J-1}} \in \mathbb{R}^{c_{J-1}},
$$
with
$$
\Pi_{J-1}^J  = R_{ave}.
$$
This can be true also thanks to our structure that 
\begin{equation}\label{eq:c_u}
c_{u,\ell} = c_{u}, \quad 1 \le \ell \le J.
\end{equation}
Given an image $f$, similar to ResNet, we apply our MgNet as follows:
\begin{equation}\label{final-mg}
y = S \circ \theta \circ u^{J}(f),
\end{equation}
where $u^J(f)$ is the output from our MgNet as described in Algorithm
\ref{alg:mgnet},  $S$ is the soft-max mapping in \eqref{softmax} and 
\begin{equation}\label{final-theta}
\theta: \mathbb{R}^{c_u} \mapsto \mathbb{R}^\kappa,
\end{equation}
represents a fully linear layer with $\kappa = 10$ for CIFAR-10 and 
$\kappa = 100$ for CIFAR-100.

We will make the following choice of hyperparameters
for the MgNet:
\begin{itemize}
	\item $f_{\rm in}$: data initialization process. Similar to ResNet, we take 
	$f_{\rm in}(f) = \sigma \circ \theta^0(f)$ as discussed in \S~\ref{sec:ini-mgnet} and \S~\ref{sec:CNNs}.
	\item $J$: the number of grids. As all images in CIFAR-10 or CIFAR-100
	are $32\times 32 \times 3$,  we choose $J = 5$ to be consistent with ResNet.
	\item $\nu_\ell$:  the number of smoothings in each grids. To be consistent with
	ResNet-18 or ResNet-34 we choose $\nu_\ell = 2$ or $\nu_\ell = 4$.
	\item $c_u$ and $c_f$: the number of feature and data channels. 
	\item $A^\ell$: the data-feature mapping. We choose the linear case in \eqref{linearA}.
	\item $B^{\ell,i}$: the feature extractor. We choose the variable extractors as in \eqref{extractor-ell}.
	\item $R_{\ell}^{\ell+1}$: the restriction operator in \eqref{restrict-f}. 
	Here we choose it as a convolution with stride $2$ which need to be trained.
	\item $\Pi_\ell^{\ell+1}$: the interpolation operator in
	\eqref{interpolation}.  Here we compare these next three
	different choices: 
	\begin{enumerate}
		\item {$\Pi_0$: } $\Pi_\ell^{\ell+1} = 0$;
		\item {$\Pi_1$: }convolution with stride $2$ which need to be
		trained; 
		\item {$\Pi_2$: }channel-wise interpolation as in
		\eqref{Pi}, with $\bar \Pi_{\ell}^{\ell+1}$ as a convolution
		with one channel and stride $2$ which also need to be trained. 
	\end{enumerate}
\end{itemize}

\subsection{Training algorithm}
While there are many different choices of training algorithms \cite{bottou2018optimization}, 
in our test, we adopt the popular 
stochastic gradient descent (SGD) with mini-batch and momentum for
cross-entropy loss function.
\begin{breakablealgorithm}
	\caption{SGD with mini-batch and momentum}
	\label{alg:sgd}
	\begin{algorithmic}
		\State {\bf Input}: learning rate $\eta_t$, batch size $m$, parameter Initialization $ w_0$, number of epochs $K$. 
		\For{Epoch $k = 1:K$} \\
		\State Shuffle data and get mini-batch $B_1, \cdots, B_{\frac{N}{m}}$, choose mini-batch as: $B_{i_t}$ with
		$$
		i_t \equiv t \mod(\frac{N}{m}),
		$$
		\State Compute the gradient on $B_{i_t}$:
		$$
		g_t = \nabla_{w} \frac{1}{m} \sum_{i \in B_{i_t}} h_i(w_{t}).
		$$
		\State Compute the momentum:
		\begin{equation}
		v_t = \alpha v_{t-1} - \eta_t g_t \quad (v_0 = 0).
		\end{equation}
		\State Update $w$:
		\begin{equation}
		w_{t+1} = w_t + v_t.
		\end{equation}
		\EndFor
	\end{algorithmic}
\end{breakablealgorithm}

Here we have $h_i(w_t) = l(\classmap(f_i;w_t),y_i)$ as defined in \eqref{eq:3}, where $w_t$ notes all free parameters in MgNet and $\theta$ in \eqref{final-theta}.
We use the SGD with momentum of 0.9. 
The mini-batch size is chosen as
$m=128$. The learning rate starts from 0.1 and is divided by $10$ for
every $30$ epochs, and the models are trained for up to $K=120$ epochs.
We adopt batch normalization (BN) after each convolution and before
activation, following \cite{ioffe2015batch}.  Initialization strategy
is the same with ResNet as in \cite{he2015delving}.  We
do not use weight decay and dropout.  The final Top-1 test accuracy is
shown in Table~\ref{comparison}.
\begin{table}[!htbp]
	\caption{ResNet and MgNet on CIFAR-10 and CIFAR-100. 
		Our methods are named with $\nu_\ell$, ($c_u$, $c_f$), $\Pi_\ell^{\ell+1}$ by definition above.}
	\label{comparison}
	\vskip 0.15in
	\begin{center}
		%		\resizebox{.5\textwidth}{!}{
		\begin{tabular}{cccc}
			\hline
			Models & CIFAR-10 & CIFAR-100 & Params \\
			\hline
			ResNet-18 & 92.24 & 71.96 & 11.2M   \\
			ResNet-34 & 92.80 & 71.93 & 21.3M   \\
			\hline
			%				{$\nu_\ell$,  $(c_u, c_f)$, $\Pi_\ell^{\ell+1}$} & Accuracy & Accuracy & Numbers  \\
			$2, (256,256)$, $\Pi_0$ & 92.02 & 68.29 & 7.1M  \\
			$2, (256,256)$, $\Pi_1$ & 93.04 & 72.32 & 8.9M  \\
			$2, (256,512)$, $\Pi_1$ & 93.20 & 72.42 & 19.5M  \\ 
			$2, (256,512)$, $\Pi_2$ & 93.53& 74.26 & 17.7M  \\ 
			\hline
		\end{tabular} 
		%		}
	\end{center}
	\vskip -0.1in
\end{table}

From the above numerical results, we find that the modified CNN models
based on MgNet structure have competitive and sometimes better
performance in comparison with standard ResNet models when applied to
both CIFAR-10 and CIFAR-100 data sets. Generally speaking, the more
channels the better performance you can achieve (see WideResNet
\cite{zagoruyko2016wide} for similar observation). Furthermore,
$\Pi_1$ and $\Pi_2$ work better than $\Pi_0$, and $\Pi_2$ can even
work better than $\Pi_1$ with fewer parameters for big enough 
channel numbers.

\section{Concluding remarks}\label{sec:conclusion}
By carefully studying the connections between the traditional
multigrid method and the convolutional neural network (especially the
ResNet type) models, the MgNet established in this paper provides a
unified framework that connects both multigrid and CNN in a technical
level.  Comparing with other existing works that discuss the
connection between multigrid and CNN, MgNet goes beyond formal or
qualitative comparisons and identifies key model components that play
the same corresponding roles, from an abstract viewpoint, for these two different
methodologies.  As a result, how and why CNN models work can be
mathematically understood in a similar fashion as for multigrid method
which has a much more mature and better developed theory.  Motivated
from various known techniques from multigrid method, many variants and
improvements of CNN can then be naturally obtained.  For example, as
demonstrated from our preliminary numerical experiments, the resulting
modified CNN models equipped with fewer weights and hyperparameters
actually exhibit competitive and sometimes better performance than
standard ResNet models.

The MgNet framework opens a new door to the
mathematical understanding, analysis and improvements of deep learning
models.  The very preliminary results presented in
this paper have demonstrated the great potential of MgNet from both
theoretical and practical viewpoints.  Obviously many aspects of MgNet
should be further explored and expect to be much improved.  In fact, only very
few techniques from multigrid method have been tried in this paper and
many more in-depth techniques from multigrid require further study for
deep neural networks, especially CNN.  
In particular, we believe that the MgNet framework will
lead to improved CNN that only has a small fraction of the number
of weights that are required by the current CNN. On the other hand,
the techniques in CNN can also be used to develop new generation of multigrid
and especially  algebraic multigrid methods \cite{xu2017algebraic} for solving
partial differential equations. Our ongoing works have
demonstrated great potentials for research in these directions and  many
more results will be reported in future papers. 

\section*{Acknowledgement}
We would like to thank Xiaodong Jia for his help with the numerical experiments.
The work of the first author was supported in part by 
The Elite Program of Computational and Applied
Mathematics for PhD Candidates of Peking University.
%The work of the second author was supported in part by 
%the DOE Grant DE-SC0014400 and NSF Grant DMS-1819157.
The work of the second author was supported in part by 
the US National Science Foundation under Award Number DMS-1819157
and also by the US Department of 
Energy Office of Science, Office of Advanced Scientific Computing Research, 
Applied Mathematics program under Award Number DE-SC0014400.
%This work is partially supported by The Elite Program of Computational and Applied
%Mathematics for PhD Candidates of Peking University, NSFC Grant
%91430215, NSF Grants DMS-1522615, DMS-1819157 and DOE Grants DE-SC0014400.
%Juncai He is supported in part by National Natural Science 
%Foundation of China (NSFC) (Grant No. 91430215) and 
%The Elite Program of Computational and Applied Mathematics for PHD Candidates of Peking University.
%Jinchao Xu is supported in part by NSF Grant DMS-1819157.

%\newpage
\bibliographystyle{plainnat}      % mathematics and physical sciences
\bibliography{MgNet-arXiv-1May}

\begin{thebibliography}{55}
\providecommand{\natexlab}[1]{#1}
\providecommand{\url}[1]{\texttt{#1}}
\expandafter\ifx\csname urlstyle\endcsname\relax
  \providecommand{\doi}[1]{doi: #1}\else
  \providecommand{\doi}{doi: \begingroup \urlstyle{rm}\Url}\fi

\bibitem[Barron(1993)]{barron1993universal}
Andrew~R Barron.
\newblock Universal approximation bounds for superpositions of a sigmoidal
  function.
\newblock \emph{IEEE Transactions on Information theory}, 39\penalty0
  (3):\penalty0 930--945, 1993.

\bibitem[Bottou et~al.(2018)Bottou, Curtis, and
  Nocedal]{bottou2018optimization}
L{\'e}on Bottou, Frank~E Curtis, and Jorge Nocedal.
\newblock Optimization methods for large-scale machine learning.
\newblock \emph{SIAM Review}, 60\penalty0 (2):\penalty0 223--311, 2018.

\bibitem[Chang et~al.(2017)Chang, Meng, Haber, Tung, and
  Begert]{chang2017multi}
Bo~Chang, Lili Meng, Eldad Haber, Frederick Tung, and David Begert.
\newblock Multi-level residual networks from dynamical systems view.
\newblock \emph{arXiv preprint arXiv:1710.10348}, 2017.

\bibitem[Chen et~al.(2017)Chen, Li, Xiao, Jin, Yan, and Feng]{chen2017dual}
Yunpeng Chen, Jianan Li, Huaxin Xiao, Xiaojie Jin, Shuicheng Yan, and Jiashi
  Feng.
\newblock Dual path networks.
\newblock In \emph{Advances in Neural Information Processing Systems}, pages
  4467--4475, 2017.

\bibitem[Cybenko(1989)]{cybenko1989approximation}
George Cybenko.
\newblock Approximation by superpositions of a sigmoidal function.
\newblock \emph{Mathematics of control, signals and systems}, 2\penalty0
  (4):\penalty0 303--314, 1989.

\bibitem[Deng et~al.(2009)Deng, Dong, Socher, Li, Li, and
  Fei-Fei]{deng2009imagenet}
Jia Deng, Wei Dong, Richard Socher, Li-Jia Li, Kai Li, and Li~Fei-Fei.
\newblock Imagenet: A large-scale hierarchical image database.
\newblock In \emph{2009 IEEE conference on computer vision and pattern
  recognition}, pages 248--255. Ieee, 2009.

\bibitem[E(2017)]{e2017a}
Weinan E.
\newblock A proposal on machine learning via dynamical systems.
\newblock \emph{Communications in Mathematics and Statistics}, 5\penalty0
  (1):\penalty0 1--11, 2017.

\bibitem[E and Wang(2018)]{e2018exponential}
Weinan E and Qingcan Wang.
\newblock Exponential convergence of the deep neural network approximation for
  analytic functions.
\newblock \emph{arXiv preprint arXiv:1807.00297}, 2018.

\bibitem[Ellacott(1994)]{ellacott1994aspects}
SW~Ellacott.
\newblock Aspects of the numerical analysis of neural networks.
\newblock \emph{Acta Numerica}, 3:\penalty0 145--202, 1994.

\bibitem[Golub and Van~Loan(2012)]{golub2012matrix}
Gene~H Golub and Charles~F Van~Loan.
\newblock \emph{Matrix computations}, volume~3.
\newblock JHU press, 2012.

\bibitem[Gomez et~al.(2017)Gomez, Ren, Urtasun, and
  Grosse]{gomez2017reversible}
Aidan~N Gomez, Mengye Ren, Raquel Urtasun, and Roger~B Grosse.
\newblock The reversible residual network: Backpropagation without storing
  activations.
\newblock In \emph{Advances in Neural Information Processing Systems}, pages
  2214--2224, 2017.

\bibitem[Goodfellow et~al.(2017)Goodfellow, Bengio, and
  Courville]{goodfellow2017deep}
Ian Goodfellow, Yoshua Bengio, and Aaron Courville.
\newblock \emph{Deep learning}.
\newblock MIT press, 2017.

\bibitem[Haber et~al.(2017)Haber, Ruthotto, and Holtham]{haber2017learning}
Eldad Haber, Lars Ruthotto, and Elliot Holtham.
\newblock Learning across scales-a multiscale method for convolution neural
  networks.
\newblock \emph{arXiv preprint arXiv:1703.02009}, 2017.

\bibitem[Hackbusch(1994)]{hackbusch1994iterative}
Wolfgang Hackbusch.
\newblock \emph{Iterative solution of large sparse systems of equations},
  volume~95.
\newblock Springer, 1994.

\bibitem[Hackbusch(2013)]{hackbusch2013multi}
Wolfgang Hackbusch.
\newblock \emph{Multi-grid methods and applications}, volume~4.
\newblock Springer Science \& Business Media, 2013.

\bibitem[He et~al.(2018)He, Li, Xu, and Zheng]{he2018relu}
Juncai He, Lin Li, Jinchao Xu, and Chunyue Zheng.
\newblock Relu deep neural networks and linear finite elements.
\newblock \emph{arXiv preprint arXiv:1807.03973}, 2018.

\bibitem[He et~al.(2015)He, Zhang, Ren, and Sun]{he2015delving}
Kaiming He, Xiangyu Zhang, Shaoqing Ren, and Jian Sun.
\newblock Delving deep into rectifiers: Surpassing human-level performance on
  imagenet classification.
\newblock In \emph{Proceedings of the IEEE international conference on computer
  vision}, pages 1026--1034, 2015.

\bibitem[He et~al.(2016{\natexlab{a}})He, Zhang, Ren, and Sun]{he2016deep}
Kaiming He, Xiangyu Zhang, Shaoqing Ren, and Jian Sun.
\newblock Deep residual learning for image recognition.
\newblock In \emph{Proceedings of the IEEE Conference on Computer Vision and
  Pattern Recognition}, pages 770--778, 2016{\natexlab{a}}.

\bibitem[He et~al.(2016{\natexlab{b}})He, Zhang, Ren, and Sun]{he2016identity}
Kaiming He, Xiangyu Zhang, Shaoqing Ren, and Jian Sun.
\newblock Identity mappings in deep residual networks.
\newblock In \emph{European Conference on Computer Vision}, pages 630--645.
  Springer, 2016{\natexlab{b}}.

\bibitem[Hornik et~al.(1989)Hornik, Stinchcombe, and
  White]{hornik1989multilayer}
Kurt Hornik, Maxwell Stinchcombe, and Halbert White.
\newblock Multilayer feedforward networks are universal approximators.
\newblock \emph{Neural networks}, 2\penalty0 (5):\penalty0 359--366, 1989.

\bibitem[Hsieh et~al.(2018)Hsieh, Zhao, Eismann, Mirabella, and
  Ermon]{hsieh2018learning}
Jun-Ting Hsieh, Shengjia Zhao, Stephan Eismann, Lucia Mirabella, and Stefano
  Ermon.
\newblock Learning neural pde solvers with convergence guarantees.
\newblock \emph{ICLR 2019}, 2018.

\bibitem[Huang et~al.(2017)Huang, Liu, Van Der~Maaten, and
  Weinberger]{huang2017densely}
Gao Huang, Zhuang Liu, Laurens Van Der~Maaten, and Kilian~Q Weinberger.
\newblock Densely connected convolutional networks.
\newblock In \emph{CVPR}, volume~1, page~3, 2017.

\bibitem[Ioffe and Szegedy(2015)]{ioffe2015batch}
Sergey Ioffe and Christian Szegedy.
\newblock Batch normalization: Accelerating deep network training by reducing
  internal covariate shift.
\newblock \emph{arXiv preprint arXiv:1502.03167}, 2015.

\bibitem[Katrutsa et~al.(2017)Katrutsa, Daulbaev, and
  Oseledets]{katrutsa2017deep}
Alexandr Katrutsa, Talgat Daulbaev, and Ivan Oseledets.
\newblock Deep multigrid: learning prolongation and restriction matrices.
\newblock \emph{arXiv preprint arXiv:1711.03825}, 2017.

\bibitem[Ke et~al.(2016)Ke, Maire, and Stella]{ke2016multigrid}
Tsung-Wei Ke, Michael Maire, and X~Yu Stella.
\newblock Multigrid neural architectures.
\newblock \emph{arXiv preprint arXiv:1611.07661}, 2016.

\bibitem[Krizhevsky and Hinton(2009)]{krizhevsky2009learning}
Alex Krizhevsky and Geoffrey Hinton.
\newblock Learning multiple layers of features from tiny images.
\newblock Technical report, Citeseer, 2009.

\bibitem[Krizhevsky et~al.(2012)Krizhevsky, Sutskever, and
  Hinton]{krizhevsky2012imagenet}
Alex Krizhevsky, Ilya Sutskever, and Geoffrey~E. Hinton.
\newblock Imagenet classification with deep convolutional neural networks.
\newblock In \emph{International Conference on Neural Information Processing
  Systems}, pages 1097--1105, 2012.

\bibitem[Larsson et~al.(2016)Larsson, Maire, and
  Shakhnarovich]{larsson2016fractalnet}
Gustav Larsson, Michael Maire, and Gregory Shakhnarovich.
\newblock Fractalnet: Ultra-deep neural networks without residuals.
\newblock \emph{arXiv preprint arXiv:1605.07648}, 2016.

\bibitem[LeCun et~al.(1998)LeCun, Bottou, Bengio, and
  Haffner]{lecun1998gradient}
Yann LeCun, L{\'e}on Bottou, Yoshua Bengio, and Patrick Haffner.
\newblock Gradient-based learning applied to document recognition.
\newblock \emph{Proceedings of the IEEE}, 86\penalty0 (11):\penalty0
  2278--2324, 1998.

\bibitem[LeCun et~al.(2015)LeCun, Bengio, and Hinton]{lecun2015deep}
Yann LeCun, Yoshua Bengio, and Geoffrey Hinton.
\newblock Deep learning.
\newblock \emph{nature}, 521\penalty0 (7553):\penalty0 436, 2015.

\bibitem[Li and Shi(2017)]{li2017a}
Zhen Li and Zuoqiang Shi.
\newblock A flow model of neural networks.
\newblock \emph{arXiv preprint arXiv:1708.06257v2}, 2017.

\bibitem[Lin et~al.(2017)Lin, Doll{\'a}r, Girshick, He, Hariharan, and
  Belongie]{lin2017feature}
Tsung-Yi Lin, Piotr Doll{\'a}r, Ross Girshick, Kaiming He, Bharath Hariharan,
  and Serge Belongie.
\newblock Feature pyramid networks for object detection.
\newblock In \emph{Proceedings of the IEEE Conference on Computer Vision and
  Pattern Recognition}, pages 2117--2125, 2017.

\bibitem[Liu et~al.(2017)Liu, Wen, Liu, and Huang]{liu2017when}
Ding Liu, Bihan Wen, Xianming Liu, and Thomas~S. Huang.
\newblock When image denoising meets high-level vision tasks: A deep learning
  approach.
\newblock \emph{arXiv preprint arXiv:1706.04284}, 2017.

\bibitem[Long et~al.(2018{\natexlab{a}})Long, Lu, and Dong]{long2018pde2}
Zichao Long, Yiping Lu, and Bin Dong.
\newblock Pde-net 2.0: Learning pdes from data with a numeric-symbolic hybrid
  deep network.
\newblock \emph{arXiv preprint arXiv:1812.04426}, 2018{\natexlab{a}}.

\bibitem[Long et~al.(2018{\natexlab{b}})Long, Lu, Ma, and Dong]{long2018pde1}
Zichao Long, Yiping Lu, Xianzhong Ma, and Bin Dong.
\newblock {PDE}-net: Learning {PDE}s from data.
\newblock In \emph{Proceedings of the 35th International Conference on Machine
  Learning}, volume~80. PMLR, 2018{\natexlab{b}}.

\bibitem[Lu et~al.(2018)Lu, Zhong, Li, and Dong]{lu2018beyond}
Yiping Lu, Aoxiao Zhong, Quanzheng Li, and Bin Dong.
\newblock Beyond finite layer neural networks: Bridging deep architectures and
  numerical differential equations.
\newblock In \emph{Proceedings of the 35th International Conference on Machine
  Learning}, volume~80. PMLR, 2018.

\bibitem[Mao et~al.(2016)Mao, Shen, and Yang]{mao2016image}
Xiao-Jiao Mao, Chunhua Shen, and Yu-Bin Yang.
\newblock Image restoration using very deep convolutional encoder-decoder
  networks with symmetric skip connections.
\newblock In \emph{Advances in neural information processing systems}, pages
  2802--2810, 2016.

\bibitem[Milletari et~al.(2016)Milletari, Navab, and Ahmadi]{milletari2016v}
F.~Milletari, N.~Navab, and S.~A. Ahmadi.
\newblock V-net: Fully convolutional neural networks for volumetric medical
  image segmentation.
\newblock In \emph{2016 Fourth International Conference on 3D Vision (3DV)},
  pages 565--571, Oct 2016.
\newblock \doi{10.1109/3DV.2016.79}.

\bibitem[Montanelli and Du(2017)]{montanelli2017deep}
Hadrien Montanelli and Qiang Du.
\newblock Deep relu networks lessen the curse of dimensionality.
\newblock \emph{arXiv preprint arXiv:1712.08688}, 2017.

\bibitem[Nair and Hinton(2010)]{nair2010rectified}
Vinod Nair and Geoffrey~E Hinton.
\newblock Rectified linear units improve restricted boltzmann machines.
\newblock In \emph{Proceedings of the 27th international conference on machine
  learning (ICML-10)}, pages 807--814, 2010.

\bibitem[Noh et~al.(2015)Noh, Hong, and Han]{noh2015learning}
Hyeonwoo Noh, Seunghoon Hong, and Bohyung Han.
\newblock Learning deconvolution network for semantic segmentation.
\newblock In \emph{Proceedings of the IEEE international conference on computer
  vision}, pages 1520--1528, 2015.

\bibitem[Pinkus(1999)]{pinkus1999approximation}
Allan Pinkus.
\newblock Approximation theory of the mlp model in neural networks.
\newblock \emph{Acta numerica}, 8:\penalty0 143--195, 1999.

\bibitem[Ronneberger et~al.(2015)Ronneberger, Fischer, and
  Brox]{ronneberger2015u}
Olaf Ronneberger, Philipp Fischer, and Thomas Brox.
\newblock U-net: Convolutional networks for biomedical image segmentation.
\newblock In \emph{International Conference on Medical Image Computing and
  Computer-Assisted Intervention}, pages 234--241. Springer, 2015.

\bibitem[Shaham et~al.(2018)Shaham, Cloninger, and Coifman]{shaham2018provable}
Uri Shaham, Alexander Cloninger, and Ronald~R Coifman.
\newblock Provable approximation properties for deep neural networks.
\newblock \emph{Applied and Computational Harmonic Analysis}, 44\penalty0
  (3):\penalty0 537--557, 2018.

\bibitem[Siegel and Xu(2019)]{siegel2019approximation}
Jonathan~W Siegel and Jinchao Xu.
\newblock On the approximation properties of neural networks.
\newblock \emph{arXiv preprint arXiv:1904.02311}, 2019.

\bibitem[Simonyan and Zisserman(2014)]{simonyan2014very}
Karen Simonyan and Andrew Zisserman.
\newblock Very deep convolutional networks for large-scale image recognition.
\newblock \emph{Computer Science}, 2014.

\bibitem[Szegedy et~al.(2015)Szegedy, Liu, Jia, Sermanet, Reed, Anguelov,
  Erhan, Vanhoucke, and Rabinovich]{szegedy2015going}
Christian Szegedy, Wei Liu, Yangqing Jia, Pierre Sermanet, Scott Reed, Dragomir
  Anguelov, Dumitru Erhan, Vincent Vanhoucke, and Andrew Rabinovich.
\newblock Going deeper with convolutions.
\newblock In \emph{Proceedings of the IEEE conference on computer vision and
  pattern recognition}, pages 1--9, 2015.

\bibitem[Xu(1992)]{xu1992iterative}
Jinchao Xu.
\newblock Iterative methods by space decomposition and subspace correction.
\newblock \emph{SIAM {R}eview}, 34\penalty0 (4):\penalty0 581--613, 1992.

\bibitem[Xu(2019)]{xu2019FEM}
Jinchao Xu.
\newblock \emph{The Finite Element Methods}.
\newblock 2019.
\newblock \url{http://www.multigrid.org/wiki}.

\bibitem[Xu and Zikatanov(2002)]{xu2002method}
Jinchao Xu and Ludmil Zikatanov.
\newblock {The method of alternating projections and the method of subspace
  corrections in Hilbert space}.
\newblock \emph{Journal of the American Mathematical Society}, 15\penalty0
  (3):\penalty0 573--597, 2002.

\bibitem[Xu and Zikatanov(2017)]{xu2017algebraic}
Jinchao Xu and Ludmil Zikatanov.
\newblock Algebraic multigrid methods.
\newblock \emph{Acta Numerica}, 26:\penalty0 591--721, 2017.

\bibitem[Zagoruyko and Komodakis(2016)]{zagoruyko2016wide}
Sergey Zagoruyko and Nikos Komodakis.
\newblock Wide residual networks.
\newblock In \emph{British Machine Vision Conference}, pages 87.1--87.12, 2016.

\bibitem[Zhang et~al.(2017{\natexlab{a}})Zhang, Qi, Xiao, and
  Wang]{zhang2017interleaved}
Ting Zhang, Guo-Jun Qi, Bin Xiao, and Jingdong Wang.
\newblock Interleaved group convolutions.
\newblock In \emph{Proceedings of the IEEE International Conference on Computer
  Vision}, pages 4373--4382, 2017{\natexlab{a}}.

\bibitem[Zhang et~al.(2017{\natexlab{b}})Zhang, Li, Loy, and
  Lin]{zhang2017polynet}
Xingcheng Zhang, Zhizhong Li, Chen~Change Loy, and Dahua Lin.
\newblock Polynet: A pursuit of structural diversity in very deep networks.
\newblock In \emph{Computer Vision and Pattern Recognition (CVPR), 2017 IEEE
  Conference on}, pages 3900--3908. IEEE, 2017{\natexlab{b}}.

\bibitem[Zhou(2018)]{zhou2018universality}
Ding-Xuan Zhou.
\newblock Universality of deep convolutional neural networks.
\newblock \emph{arXiv preprint arXiv:1805.10769}, 2018.

\end{thebibliography}

\end{document}